\DeclarePairedDelimiterX{\infdivx}[2]{(}{)}{%
  #1\;\delimsize\|\;#2%
}
\newcommand{\dsuppeps}[2]{d^\epsilon_{\supp}\infdivx{#1}{#2}}
\newcommand{\dsuppepsk}[3]{d^{#3,\epsilon}_{\supp}\infdivx{#1}{#2}}
\newcommand{\dsuppx}[3]{{#1}_{\supp}\infdivx{#2}{#3}}
\newcommand{\underbracet}[2]{\underbrace{#1\vphantom{\sum}}_{\mbox{\scriptsize #2}}}
\newtheorem{thmlem}{Lemma}
\newtheorem{thmprop}{Proposition}
\newtheorem{thmthm}{Theorem}
\newtheorem{thmasmp}{Assumption}
\newtheorem{thmcond}{Condition}
\theoremstyle{definition}
\newtheorem{thmex}{Example}
\newtheorem{thmdef}{Definition}
\newtheorem{thmrem}{Remark}
\newenvironment{thmproofsketch}[1][Proof sketch]{\begin{trivlist}
\item[\hskip \labelsep {\textit{#1.}}]}{\end{trivlist}}
\newtheorem*{rep@theorem}{\rep@title}
\newcommand{\newreptheorem}[2]{%
\newenvironment{rep#1}[1]{%
 \def\rep@title{#2 \ref{##1} (Restated)}%
 \begin{rep@theorem}}%
 {\end{rep@theorem}}}
\def\cA{\mathcal A}
\def\cL{\mathcal L}
\def\cD{\mathcal D}
\def\cX{\mathcal X}
\def\cY{\mathcal Y}
\def\cF{\mathcal F}
\def\cG{\mathcal G}
\def\cH{\mathcal H}
\def\cZ{\mathcal Z}
\def\cF{\mathcal{F}}
\def\supp{\textnormal{supp}}
\def\hdh{\cH\Delta\cH}
\DeclareMathOperator*{\minimize}{\;\;minimize\;\;}
\def\En{\mathbb{E}}
\DeclareMathOperator*{\E}{\mathbb{E}}
\DeclareMathOperator*{\pr}{\textnormal{Pr}}
\def\ps{p_s}
\def\pt{p_t}
\def\hps{\hat{p}_s}
\def\hpt{\hat{p}_t}
\begin{document}

\twocolumn[

\aistatstitle{Support and Invertibility in Domain-Invariant Representations}

\aistatsauthor{Fredrik D. Johansson \And  David Sontag \And Rajesh Ranganath }

%\aistatsaddress{CSAIL \& IMES, MIT \And  CIMS \& CDS, NYU \And CSAIL \& IMES, MIT}
\aistatsaddress{MIT \And MIT \And  NYU }
]

\begin{abstract}
Learning domain-invariant representations has become a popular approach to unsupervised domain adaptation and is often justified by invoking a particular suite of theoretical results. We argue that there are two significant flaws in such arguments. First, the results in question hold only for a fixed representation and do not account for information lost in non-invertible transformations. Second, domain invariance is often a far too strict requirement and does not always lead to consistent estimation, even under strong and favorable assumptions. In this work, we give generalization bounds for unsupervised domain adaptation that hold for any representation function by acknowledging the cost of non-invertibility. In addition, we show that penalizing distance between densities is often wasteful and propose a bound based on measuring the extent to which the support of the source domain covers the target domain. We perform experiments on well-known benchmarks that illustrate the short-comings of current standard practice.
\end{abstract}

%
% INTRODUCTION
%

\section{Introduction}
Domain transfer is a critical component of many machine learning problems: Self-driving cars must be robust to changes in weather conditions and landscape; Estimates of the efficacy of drugs that pass clinical trials should be valid for the population to which the drugs are prescribed; Policies for robotic control learned in simulated environments should be useful in the real world. In so-called \emph{unsupervised domain adaptation}, labeled data are available only in a limited setting (e.g. driving only in San Francisco; patients restricted to a clinical trial cohort; simulated environments) called the \emph{source domain}. The context in which models are ultimately applied is called the \emph{target domain}.

When the label function is assumed stationary and source and target domains share statistical support, the classical solution to domain adaptation problems is \emph{importance sampling} (IS)~\citep{shimodaira2000improving}. In IS methods, the influence of an observation on the learning algorithm is determined by its likelihood ratio between target and source domains. While asymptotically unbiased, IS estimators suffer from large variance~\citep{cortes2010learning} and are inapplicable when the target domain is not covered by the source. The latter is typical for many of the high-dimensional problems addressed in modern machine learning.

\emph{Domain-invariant representations} have emerged as new, widely-used tools for domain transfer~\citep{ben2007analysis,ganin2016domain,long2015learning} in problems where the label function is assumed fixed, but the covariate distribution changes between domains---so-called \emph{covariate shift}. These methods work by uncovering predictive components of data that are distributed similarly across domains---an idea that has been justified by a string of theoretical work~\citep{ben2007analysis,mansour2009domain,ben2010theory,cortes2011domain}.  Crucially, these bounds do not rely on common support. Related ideas have been applied also under target (label) shift~\citep{gong2016domain}.

Given the prevalence of algorithms learning domain-invariant representations, we ask: Under what conditions do these algorithms recover an optimal hypothesis? What are potential failure modes? We argue that there is discord between existing theoretical guarantees, how they are used to justify learning algorithms, and how these algorithms perform empirically. In particular, we give small example in which a) the objective of many algorithms is minimal but the target error is arbitrarily bad, and b) empirical performance is good but generalization bounds are surprisingly large.

First, we argue that regularizing representations to be domain invariant is too strict, in particular when domains (partially) overlap. We support this claim by giving examples where empirical risk minimization on source data only outperforms domain-invariant representation learning algorithms. As an alternative, we give a generalization bound that measures the lack of \emph{overlapping support} between domains. Our bound applies directly to learned representations and is tight when source and target domains are equal.

Second, for domain-invariant representation learning to succeed, the label must be predictable from the learned representation. When representations are regularized to reduce domain discrepancy, the class of admissible hypotheses shrinks and predictions worsen. This phenomenon may be asymmetric: a representation may be more suitable for the source domain than the target domain. We use this insight to characterize the unobservable adaptation error  from losing information in non-invertible representations.

Finally, we study the performance of domain-invariant representation learning on a well-known benchmark task through the lens of our theoretical findings.

%
% BACKGROUND
%
\section{Background}
We study \emph{unsupervised domain adaptation}, defined as follows. Samples $\cD_{s} = \{(x_i, y_i)\}_{i=1}^n$ of features $X \in \cX$ and labels $Y \in \cY$ are observed from a \emph{source domain}, distributed according to a density $\ps(X, Y)$. In addition, we observe \emph{unlabeled} samples $\cD_{t} = \{x_i'\}_{i=1}^m$ from a \emph{target domain}, distributed according to a density $\pt(X)$. Unobserved labels in the target domain are distributed according to $\pt(Y\mid X)$. Based on $\cD_s$ and $\cD_t$, the unsupervised domain adaptation problem is to obtain an hypothesis $h \in \cH$ that minimizes the \emph{target risk} $R_t$ as measured by a loss function $\ell : \cY \times \cY \rightarrow \mathbb{R}$,
\begin{equation}
R^{\ell}_t(h) \coloneqq \E_{x, y\sim \pt}[\ell(h(x), y)] ~.
\label{eq:targetrisk}
\end{equation}

Analogously to \eqref{eq:targetrisk}, we define the \emph{source risk} as $R^{\ell}_s(h) = \E_{x, y\sim \ps}[\ell(h(x), y)]$. When clear from context, we leave out the superscript $\ell$ indicating the loss function. In the sequel, unless otherwise stated, we let $\cY = \{0,1\}$ and $\ell$ be the zero-one loss, $\ell(y,y') = \mathds{1}[y \neq y']$. We call $R_t(h) - R_s(h)$ the \emph{adaptation error}.

In this work, we make the \emph{covariate shift} assumption which states that the conditional density of labels given features is stationary across domains. This is justifiable in some problems but not in all~\citep{gong2016domain}.
\begin{thmasmp}[]\label{asmp:cshift}
  Domains $\ps(X,Y)$ and $\pt(X,Y)$ satisfy the \textit{covariate shift} assumption if
  $$
  \ps(Y\mid X) = \pt(Y\mid X) = p(Y\mid X)~.%,\;\; \ps(X) \neq \pt(X)~.
  $$
\end{thmasmp}
%\fxfatal{Removed $\ps(X) \neq \pt(X)$. ``Covariate shift'' doesn't make sense then though...}

We say that $Y$ is \emph{realizable} in $\cH$ if $p(Y\mid X) \in \cH$ and that $Y$ is \emph{identifiable} over $p_t$ if, under a set of assumptions on $p_s, p_t$, a function $h$ may be obtained based on knowledge of $\ps(Y, X)$ and $\pt(X)$ such that $\forall x \in \supp(\pt) : h(x) = p(Y\mid X = x)$. In the case of deterministic hypotheses and labels, or when only label expectations are of interest, we may substitute conditional densities with appropriate mappings.

As no labels are observed from the target domain, models that minimize risk (only) on the source domain are often biased. There are two common alternative strategies to minimize target risk: \emph{importance-weighting} and \emph{minimization of upper bounds on the target risk}. %While the latter is the main concern of this work, we describe importance weighting below to contextualize previous and new results.

\subsection{Importance weighting}

Under Assumption~\ref{asmp:cshift} (covariate shift), the target risk $R_t$ may be approximated using importance-weighted samples $\cD_s$ from the source density~\citep{shimodaira2000improving},
\begin{align}
%R_t(h) & = \E_{x,y \sim p_s}\left[\frac{p_t(x)}{p_s(x)} \ell(h(x),y) \right] \\
%& \approx \frac{1}{n} \sum_{i=1}^n \frac{p_t(x_i)}{p_s(x_i)} \ell(h(x_i), y_i)
\hat{R}_s^w(h) := \frac{1}{n} \sum_{i=1}^n w(x_i) \ell(h(x_i), y_i)~. \label{eq:importancefinite}
\end{align}
%$$
%\E_{x\sim q(x)}[f(x)] = \E_{x\sim p(x)}\left[\frac{q(x)}{p(x)}f(x) \right]~.
%$$
If the weighting function $w$ is chosen to be $w(x) = p_t(x)/p_s(x)$, $\hat{R}_s^w(h)$ is a consistent estimator of $R_t(h)$ under the following assumption.
\begin{thmasmp}[Sufficient support]
  We say that $\ps$ has $\epsilon$-\textit{sufficient support} for $\pt$ if $\forall x \in \supp(\pt) : \ps(x) \geq \epsilon$, with $\epsilon > 0$. This is also called $\epsilon$-\textit{overlap}.
  \label{asmp:supp}
\end{thmasmp}
\citet{cortes2010learning} give generalization bounds for importance-weighted estimates such as~\eqref{eq:importancefinite}. These estimates have high variance when the largest $\epsilon$ in Assumption~\ref{asmp:supp} is small;  if there is no such $\epsilon > 0$, importance weighting is inapplicable without modification.
%This is equivalent to the support of $\pt$ not being covered by that of $\ps$.
%Finally, $p_s$ and $p_t$ are rarely known and need to be estimated from data.
%Instead we may attempt to bound the target risk using observed data.

\subsection{Upper bounds on target risk}
%To mitigate this, a common strategy is to control the way hypotheses extrapolate  beyond the support of the source domain.
When the source domain does not provide sufficient support, $\supp(\pt) \not\subseteq \supp(\ps)$, the target risk of a learned hypothesis may not be consistently estimated without further assumptions. However, we may bound the target risk from above and minimize this bound.

\citet{ben2007analysis} introduced the $\cH\Delta\cH$-distance to measure the worst-case loss from extrapolating between domains using binary hypotheses in a class $\cH$. Let $R^\ell_p(h, h') \coloneqq \E_{x \sim p}[\ell(h(x), h'(x))]$ denote the expected disagreement between two hypotheses $h, h'$. Then, the $\hdh$-distance between $\ps$ and $\pt$ is\footnote{The definition is sometimes given with a factor 2. %We have chosen to leave this out and adjust subsequent results.
}
\begin{align}%
d_{\cH\Delta\cH}(\ps,\pt) \coloneqq \sup_{h, h' \in \cH} \left|R_s(h,h') - R_t(h,h') \right|~. \label{eq:dhdh}
\end{align}

By reducing the model class $\cH$, the potential disagreement $d_{\cH\Delta\cH}$  between member functions may be reduced---as well as the capacity of $\cH$ to predict the label $Y$. The best-in-class joint hypothesis risk is
\begin{equation}
\lambda_{\cH} \coloneqq \inf_{h\in \cH} [R_s(h) + R_t(h)]~.
\end{equation}

These quantities lead to the following bound by applying the triangle-inequality of classification error.

\begin{thmthm}[Adaptation bound by~\citet{ben2010theory}]\label{thm:benadapt}
  Under Assumption~\ref{asmp:cshift}, for all $h \in \cH$,
  \begin{equation}\label{eq:benadapt}
  R_t(h) \leq R_s(h) + d_{\hdh}(\ps, \pt) + \lambda_{\cH}~.
  \end{equation}
\end{thmthm}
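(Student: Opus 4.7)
\begin{thmproofsketch}
The plan is to ``route'' the target error $R_t(h)$ through a best-in-class joint hypothesis $h^* \in \argmin_{h' \in \cH}[R_s(h') + R_t(h')]$, so that by construction $R_s(h^*) + R_t(h^*) = \lambda_{\cH}$. The central tool is the triangle inequality for the $0/1$-loss disagreement: for any hypotheses $h_1, h_2, h_3$ and any distribution $p$, we have $R_p(h_1, h_3) \leq R_p(h_1, h_2) + R_p(h_2, h_3)$, which follows pointwise from $\mathds{1}[y \neq y']$ being a metric on $\cY = \{0,1\}$ and then integrating in $x \sim p$. Under Assumption~\ref{asmp:cshift}, the labeling $f(x) \coloneqq p(Y=1 \mid X=x)$ is shared by $\ps$ and $\pt$, so $R_p(h) = R_p(h, f)$ is unambiguous for $p \in \{\ps, \pt\}$.

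First I would split the target error against $f$ by inserting $h^*$:
$$R_t(h) = R_t(h, f) \leq R_t(h, h^*) + R_t(h^*, f) = R_t(h, h^*) + R_t(h^*).$$
Next I would swap the domain on the cross-term using definition~\eqref{eq:dhdh}: since both $h$ and $h^*$ lie in $\cH$, the pair $(h, h^*)$ is an admissible supremand, giving
$$R_t(h, h^*) \leq R_s(h, h^*) + |R_s(h, h^*) - R_t(h, h^*)| \leq R_s(h, h^*) + d_{\hdh}(\ps, \pt).$$
A second application of the triangle inequality on the source side yields
$$R_s(h, h^*) \leq R_s(h, f) + R_s(f, h^*) = R_s(h) + R_s(h^*).$$
Chaining these three inequalities gives $R_t(h) \leq R_s(h) + [R_s(h^*) + R_t(h^*)] + d_{\hdh}(\ps, \pt) = R_s(h) + d_{\hdh}(\ps, \pt) + \lambda_{\cH}$, which is the claimed bound.

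The argument has no deep obstacle; the only subtlety is bookkeeping around the identification $R_p(h) = R_p(h, f)$ when labels are stochastic. The cleanest fix is to allow $h$ and $f$ to take values in $[0,1]$ and redefine $R_p(h, h') = \E_{x \sim p}[|h(x) - h'(x)|]$, which still satisfies the triangle inequality and still recovers the $0/1$-risk; alternatively one restricts to the deterministic case treated by~\citet{ben2010theory}. A second minor point is that the admissibility of $(h, h^*)$ in the supremum defining $d_{\hdh}$ is what forces the bound to depend on the full symmetric difference hypothesis class rather than on a fixed pair, which is why the $\hdh$-distance (and not, say, total variation) appears.
\end{thmproofsketch}
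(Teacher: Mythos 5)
Your proof is correct and is exactly the standard argument the paper alludes to when it says the bound follows ``by applying the triangle-inequality of classification error'': route $R_t(h)$ through the joint minimizer $h^*$, bound the cross-term $R_t(h,h^*)$ by $R_s(h,h^*)+d_{\hdh}(\ps,\pt)$, and apply the triangle inequality once more on the source side. The paper imports this theorem from \citet{ben2010theory} without reproving it, and your derivation (including the caveat about deterministic versus stochastic labeling functions, and the absence of the factor $\tfrac{1}{2}$ owing to the paper's normalization of $d_{\hdh}$) matches the intended proof.
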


The result \eqref{eq:benadapt} may be bounded further based on the risk on a  sample $(x_1, y_1), \ldots, (x_n, y_n) \sim \ps(x,y)$ from the source domain, and an empirical estimate of $d_{\hdh}(\ps, \pt)$~\citep{ben2010theory}. Similar results have also been obtained for continuous labels~\citep{mansour2009domain,cortes2011domain}.
%We note that for $\ps = \pt$ and $h = \argmin_{h'\in \cH} [R_s(h') + R_t(h')]$, the bound is loose by a factor 3.

\subsection{Domain-invariant representation learning}
Theorem~\ref{thm:benadapt}, and a suite of follow-up work, have been used to justify algorithms based on learning \emph{domain-invariant representations}---transformations of features such that the source and target domains are approximately indistinguishable in the transformed space~\citep{ben2007analysis}. We describe these below.

Let a random variable $Z \in \cZ$ be a representation of the input features $X$, parameterized by a \emph{deterministic} function $\phi(X) =: Z$ with $\phi \in \cG \subset \{\cX \rightarrow \cZ\}$. Hypotheses $h\in \cH$ for $Y$ are formed by compositions $h = f \circ \phi$ with prediction functions $f \in \cF \subset \{\cZ \rightarrow \cY\}$ operating in the representation space $\cZ$, and $\cH \coloneqq \{f \circ \phi : f\in \cF, \phi \in \cG\}$. The probability of a set $\mathbf{z} \subseteq \cZ$ induced by $\phi$ is then
$p(Z \in \mathbf{z}) = \int_{x \in X} p(X = x) \mathds{1}[\phi(x) \in \mathbf{z}]dx$. If $\phi$ does not induce atoms, $p(Z)$ is a density; we consider only this case in the sequel.

We say that a representation $Z_\phi \coloneqq \phi(X)$ of $X$ is \emph{domain-invariant} if $\ps(Z_\phi) = \pt(Z_\phi)$.  A common approach to learning approximately domain-invariant representations is to solve the following  problem\footnote{We leave out additional regularization of $\phi$ and $f$.}.
\begin{equation}
\minimize_{\phi \in \cG, f \in \cF} \underbracet{\hat{R}_s(f \circ \phi)}{Source risk} \;+\; \alpha \cdot \underbracet{d(\hps(Z_\phi), \hpt(Z_\phi))}{Domain variance in $Z$}
\label{eq:discmin}
\end{equation}
Here, $\hps, \hpt$ denote empirical distributions of $\ps$ and $\pt$, $d$ is a distance function on densities, and $\alpha$ is a hyperparameter. In the next section, we describe several instantiations of \eqref{eq:discmin}.

\section{Related work}
\label{sec:related}
Domain adaptation has been studied primarily under Assumption~\ref{asmp:supp} (covariate shift)~\citep{pan2010survey}, which is the setting also of this work. However, prediction under shift in the target $p(Y)$ and conditional $p(X\mid Y)$ has also been considered~\citep{zhang2013domain,gong2016domain,lipton2018detecting}.
A common approach in both settings is to learn representations or projection of observed data that is invariant to the shift in question by minimizing adversarial  losses~\citep{ganin2015unsupervised,bousmalis2016domain,tzeng2017adversarial}, integral probability metrics such as the maximum mean discrepancy (MMD)~\citep{pan2011domain,long2015learning,long2016deep,baktashmotlagh2013unsupervised} and the Wasserstein distance~\citep{shalit2016estimating,courty2017joint}, or other divergences~\citep{berisha2016empirically,si2010bregman,muandet2013domain}.

Many recent methods attempt to solve domain adaptation under covariate shift by optimizing objectives similar to \eqref{eq:discmin}~\citep{ganin2015unsupervised,long2015learning,bousmalis2016domain}, with the distance $d$ chosen to be a metric such that $d(p,q) = 0 \mbox{ iff } p=q$. However, \citep{gong2016domain} point out that it is not clear under what conditions $\ps(\phi(X)) \approx \pt(\phi(X))$ would imply $\ps(Y \mid \phi(X)) \approx \pt(Y \mid \phi(X))$. \citet{ben2010impossibility} showed that Assumption~\ref{asmp:cshift} (covariate shift) and small $d_{\cH\Delta\cH}$ are not sufficient on their own to identify $Y$. On the other hand, \citet{ben2012hardness} showed that Assumption~\ref{asmp:supp} (sufficient support) is sufficient by counterexample through a reduction of the Left-Right problem~\citep{kelly2010universal}. \citet{ben2014domain} subsequently gave both upper and lower learning bounds for nearest-neighbor learners under Assumptions~\ref{asmp:supp} and so-called probabilistic Lipschitzness.

Next, we argue that that searching for a representation $Z$ such that $\ps(Z) \approx \pt(Z)$ is often undesirable and that objectives like that in \eqref{eq:discmin} are insensitive to information lost in domain-invariant representations. \footnote{In work prepared concurrently with the original publication of this work, \citet{zhao2019learning} make similar observations about deficiencies in the existing literature on unsupervised domain adaptation, and provide additional theoretical and experimental results.}

%
% REPRESENTATION LEARNING
%
\section{Limitations of domain-invariant representation learning}
\label{sec:representations}
In this section, we give concrete examples of the failure modes of domain-invariant representation learning, and propose a shift in focus for future research.

\subsection{Representation-induced adaptation error}
\label{sec:invertibility}
When features $X$ are high dimensional, they often contain information that is redundant or irrelevant for predicting the label $Y$ but distinguishes the source and target domains; the higher the dimensionality, the less likely overlap is to hold in $X$~\citep{d2017overlap}. The adaptation bounds reviewed in the previous section suggest that removing such information may reduce the difference between source and target risk by making domains closer in density. However, doing so may also introduce an unobservable error, as we see this in the following example.

\begin{figure}[t!]
  \centering
  \includegraphics[width=.8\columnwidth]{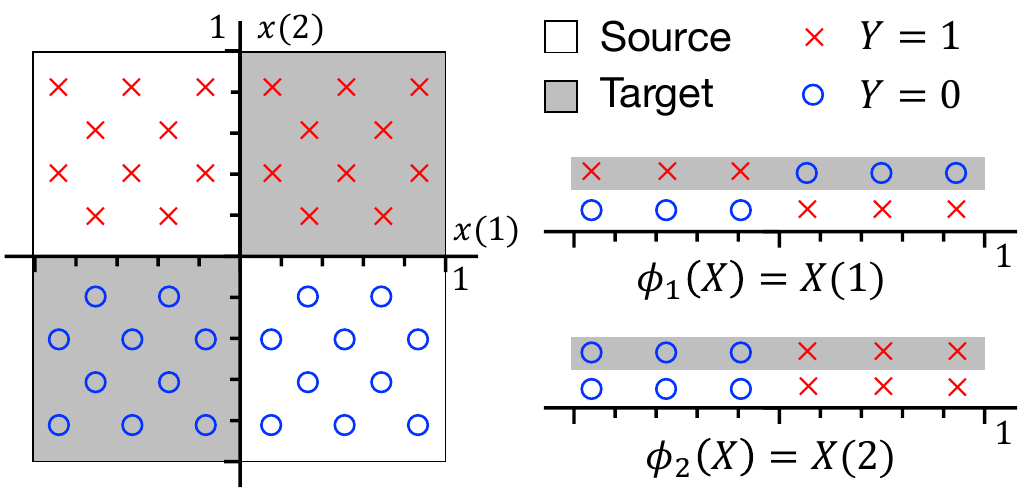}
  \caption{\label{fig:checker} Illustration of Example~\ref{ex:checker} in which there are two optimal solutions to \eqref{eq:discmin} with objective value 0 but with radically different target risk.}
\end{figure}

\begin{thmex}[Variable selection]\label{ex:checker}
Let $\cX = [-1,1]^2$ with $\pt(x) = .5$ if $x$ is in the lower left or upper right quadrants, $\{[0, 1] \times [0,1] \cup [-1, 0]\times[-1,0]\}$, and $\ps(x) = .5$ if $x$ is in the upper left or lower right quadrants, $\{[-1, 0] \times [0,1] \cup [0, 1]\times[-1,0]\}$. Further, let $Y=1$ if $x(2) > 0$, and $0$ otherwise (see Figure~\ref{fig:checker}).
Now, let $\cG$ be the set of variable selections from $\mathbb{R}^2$ to $\mathbb{R}$ and let $\cF$ be the set of threshold functions in $\mathbb{R}$. Then, for either selection of a single variable, $\phi_1(x) = x(1)$ or $\phi_2(x) = x(2)$, with $Z=\phi_i(X)$ we have that $\ps(Z) = \pt(Z)$ and $d_{\cF\Delta\cF}(\ps(Z), \pt(Z)) = 0$, and the function $f(z) = \mathds{1}[z > 0]$ has $R_s(f \circ \phi_1) = R_s(f\circ \phi_2) = 0$. However, $R_t(f \circ \phi_2) = 0$, but $\forall f\in \cF : R_t(f \circ \phi_2)\geq 1$. Hence, objective \eqref{eq:discmin} is uninformative of the target risk.
\end{thmex}

Example~\ref{ex:checker} illustrates the impossibility of domain adaptation without overlap or other additional assumptions. Based on the observed data, there is nothing that distinguishes a failure case with maximum target risk from a successful case with minimal target risk. This is true even despite the fact that the problem satisfies the following strong condition.

\begin{thmasmp}[Optimal domain-invariant representation]
There exist a representation $\phi \in \cG$ and $f \in \cF$ such that $\forall x \in \supp_X(\ps) \cup \supp_X(\pt) : f(\phi(x)) = p(Y \mid X=x)$ and $\ps(\phi(X)) = \pt(\phi(X))$~.
\label{asmp:domaininv}
\end{thmasmp}

Assumption~\ref{asmp:domaininv} is by no means guaranteed to hold in practice. Often, variables that are distributed differently across domains are critical for prediction. Regardless, Assumption~\ref{asmp:domaininv} is \emph{necessary} for domain-invariant representation learning to be consistent. However, as strong as this assumption is, it is not \emph{sufficient} for consistent domain adaptation---not with domain-invariant representations nor with any other method.

Even in problems that are possible to solve consistently, a learned representation may be more predictive on the source domain than the target domain. To reason about this case, we must apply Theorem~\ref{thm:benadapt} to the hypothesis space $\cH_\phi = \{f \circ \phi : f \in \cF\}$ induced by the representation $\phi$.

Then, for all $f\in \cF$,
\begin{equation}
R_t(f\circ \phi) \leq R_s(f\circ \phi) + d_{\cF\Delta\cF}(\ps(Z), \pt(Z)) + \lambda_{\cH_\phi}.
\label{eq:phibound}
\end{equation}
Here, $R_s(f\circ \phi)$ and $d_{\cF\Delta\cF}(\ps(Z), \pt(Z))$ may be bounded and minimized but, in contrast, \emph{$\lambda_{\cH_\phi}$ is unobserved and may increase when solving~\eqref{eq:discmin}}.

\begin{thmprop}\label{prop:phibound}
  For all $\phi \in \cG, f \in \cF$ as defined above, we have with $Z = \phi(X)$ and $\cH_\phi = \{f \circ \phi : f\in \cF\}$
  \begin{align}
    d_{\cF\Delta \cF}(\ps(Z), \pt(Z)) & \leq d_{\cH\Delta \cH}(\ps(X), \pt(X)) \\
    \lambda_{\cH_\phi} & \geq \lambda_\cH~.
  \end{align}%
\end{thmprop}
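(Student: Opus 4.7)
The plan is to reduce both inequalities to the elementary set-theoretic observation that $\cH_\phi \subseteq \cH$. Since $\phi \in \cG$ and $\cH = \{f \circ \phi' : f \in \cF, \phi' \in \cG\}$, the inclusion $\cH_\phi = \{f \circ \phi : f \in \cF\} \subseteq \cH$ is immediate, and monotonicity of $\sup$ and $\inf$ over nested sets then closes each part.

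For the bound on $\lambda$, I would simply note that $\lambda_{\cH_\phi} = \inf_{h \in \cH_\phi}[R_s(h) + R_t(h)]$ is an infimum of the nonnegative functional $R_s(h) + R_t(h)$ over the smaller set $\cH_\phi \subseteq \cH$, so taking the inf over a superset can only decrease the value, giving $\lambda_{\cH_\phi} \geq \lambda_\cH$.

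For the discrepancy bound, I would first do a change of variables to move the $\cF\Delta\cF$-distance on $\cZ$ back to a statement on $\cX$. Because $Z = \phi(X)$ is a deterministic function of $X$ and $p(Z)$ is defined as the pushforward of $p(X)$ through $\phi$, we have $\E_{z \sim p(Z)}[\ell(f(z), f'(z))] = \E_{x \sim p(X)}[\ell(f(\phi(x)), f'(\phi(x)))] = R_p(f \circ \phi, f' \circ \phi)$ for any $f, f' \in \cF$ and $p \in \{\ps, \pt\}$. Taking the supremum over $f, f' \in \cF$ yields $d_{\cF\Delta\cF}(\ps(Z), \pt(Z)) = \sup_{f, f' \in \cF} |R_s(f\circ\phi, f'\circ\phi) - R_t(f\circ\phi, f'\circ\phi)|$, and since each pair $(f\circ\phi, f'\circ\phi)$ lies in $\cH_\phi \times \cH_\phi \subseteq \cH \times \cH$, this supremum is dominated by $\sup_{h, h' \in \cH}|R_s(h, h') - R_t(h, h')| = d_{\cH\Delta\cH}(\ps(X), \pt(X))$.

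The main (and essentially only) obstacle is making sure the change of variables is compatible with the paper's definition of $p(Z)$, but the pushforward formula $p(Z \in \mathbf{z}) = \int p(X = x)\mathds{1}[\phi(x) \in \mathbf{z}]\,dx$ stated in the text is exactly the standard one, so the identity $\E_{z \sim p(Z)}[g(z)] = \E_{x \sim p(X)}[g(\phi(x))]$ for bounded measurable $g$ holds by a routine approximation argument, and the rest is just rewriting.
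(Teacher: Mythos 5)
Your proposal is correct and follows essentially the same route as the paper: the paper's proof also rests on the inclusion $\cH_\phi \subseteq \cH$, the pushforward identity $d_{\cF\Delta\cF}(\ps(Z), \pt(Z)) = d_{\cH_\phi\Delta\cH_\phi}(\ps(X), \pt(X))$, and monotonicity of $\sup$ and $\inf$ over nested sets. You have merely written out the change-of-variables step that the paper leaves implicit.
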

\begin{proof}
  The results follow immediately from the definitions of $d_{\cH\Delta\cH}$ and $\lambda_\cH$, that $d_{\cF\Delta \cF}(\ps(Z), \pt(Z)) = d_{\cH_\phi\Delta\cH_\phi}(\ps(X), \pt(X))$, and that $\cH_\phi \subseteq \cH$.
\end{proof}
As a result of Proposition~\ref{prop:phibound}, solving \eqref{eq:discmin} implies neither minimization of the RHS of \eqref{eq:benadapt} or \eqref{eq:phibound}.
One interpretation of this result, and of Example~\ref{ex:checker}, is that covariate shift  (Assumption~\ref{asmp:cshift}) need not hold with respect to the representation $Z = \phi(X)$, even if it does with respect to $X$. With $\phi^{-1}(z) = \{x : \phi(x) = z\}$,
$$
p_t(Y\mid z) = \frac{\int_{x\in \phi^{-1}(z)}p(Y\mid x)p_t(x) dx}{\int_{x \in \phi^{-1}(z)}p_t(x)dx} \neq p_s(Y\mid z)~.
$$
Equality holds for general $\ps, \pt$ only if $\phi$ is invertible. In Section~\ref{sec:support}, we define a quantity that measures the effect of this discrepancy and how it relates to invertibility, and use it to bound the target risk.

We summarize this section in a statement inspired by Lemma~2 in \citet{bareinboim2013general}.

\noindent\fbox{\parbox{0.97\columnwidth}{
If there are two distinct hypotheses $h, h'$ for the label $Y$ that are both consistent with $\ps(X, Y)$ and $\pt(X)$ and a set of assumptions $\cA$, but result in different predictions on $\pt(X)$, $Y$ is not identifiable over $\pt(X)$.
}}

Like causal inference~\citep{pearl2009causality}, successful domain adaptation is often entirely reliant on making appropriate assumptions about unobservable quantities.

\subsection{The cost of domain invariance}

\begin{figure*}[tbp!]
  \centering
  \begin{subfigure}{.31\textwidth}
    \centering
    \includegraphics[width=1\textwidth]{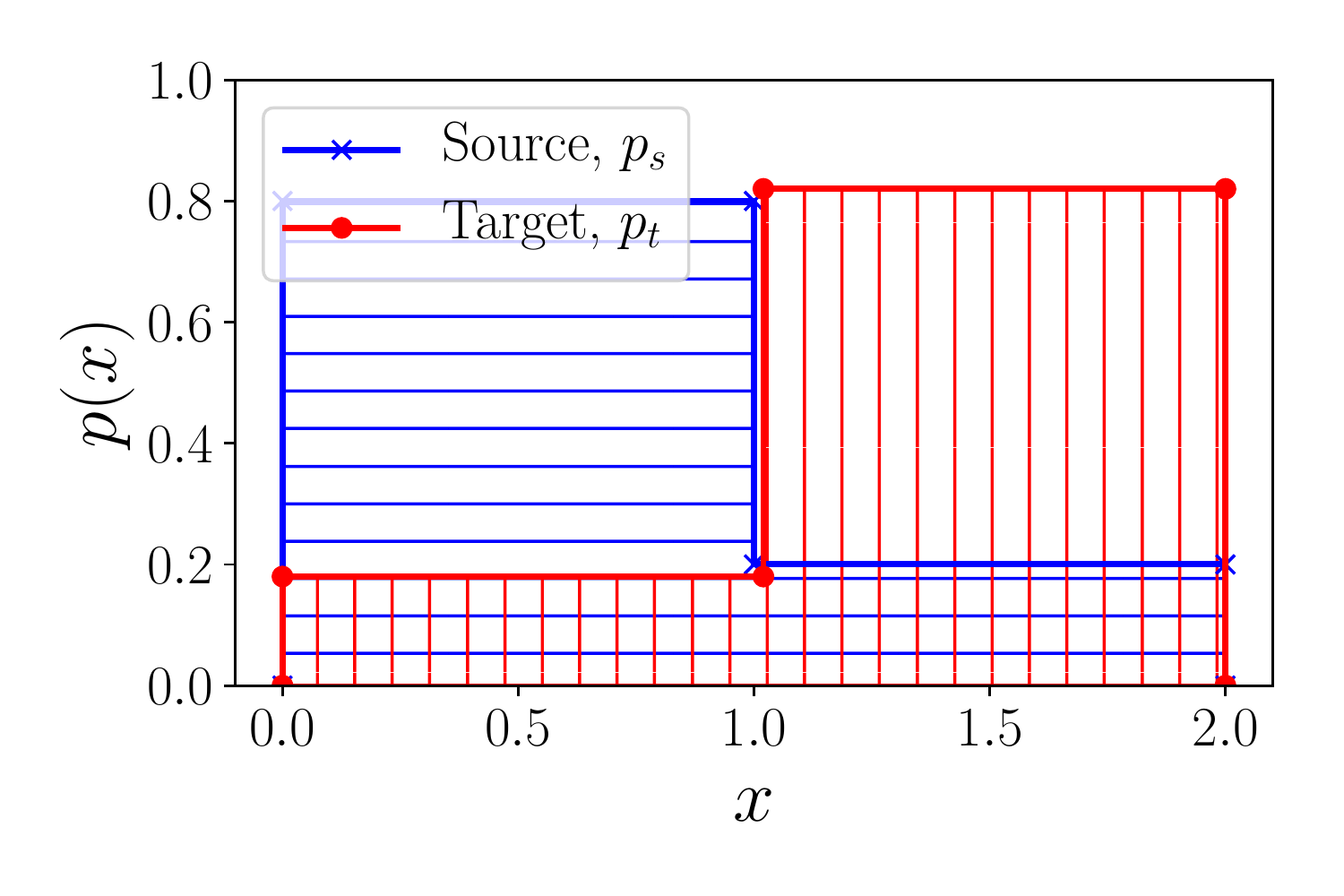}
    \caption{\label{fig:loose_example_A}Problem A}
  \end{subfigure}
  \begin{subfigure}{.31\textwidth}
    \centering
    \includegraphics[width=1\textwidth]{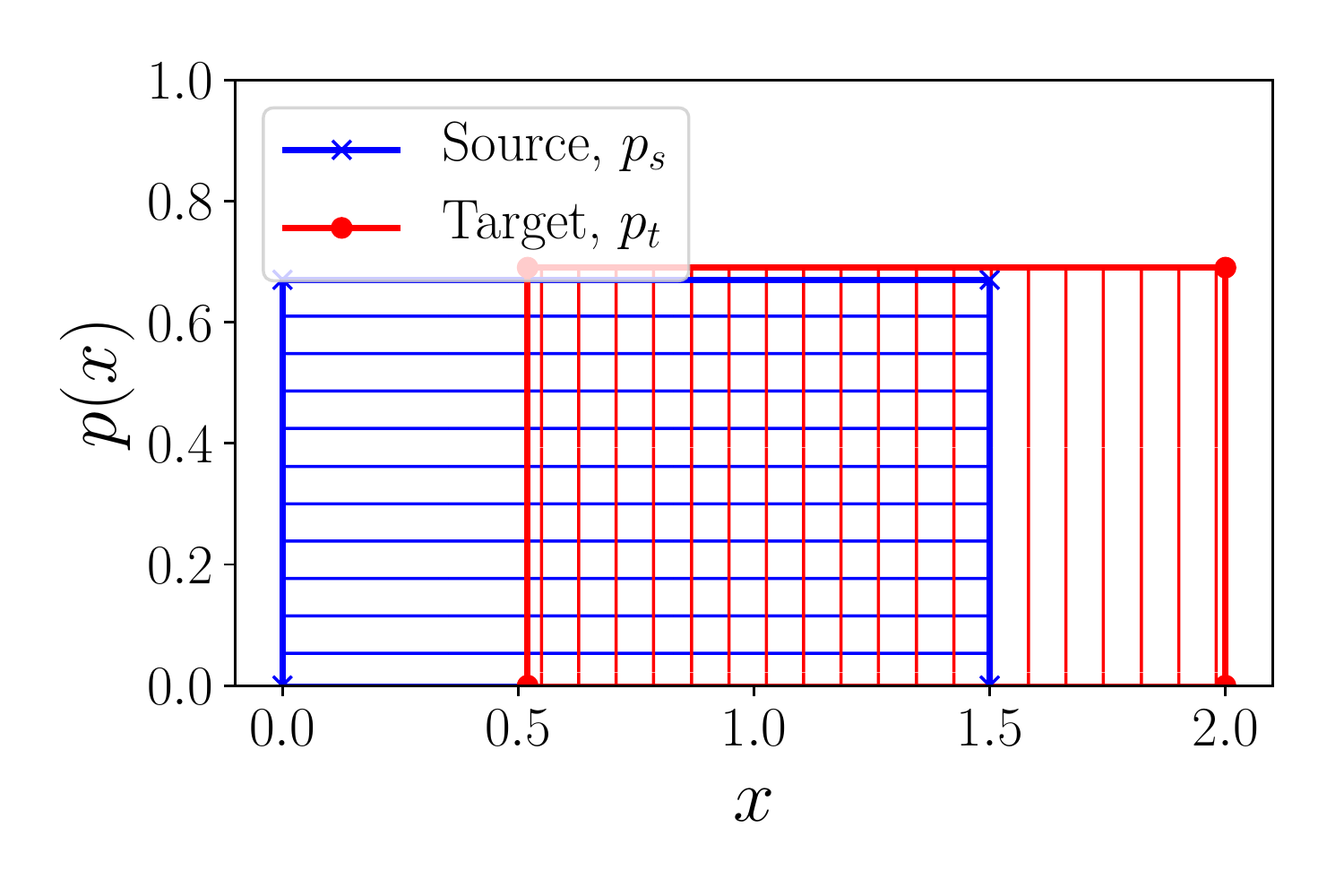}
    \caption{\label{fig:loose_example_B}Problem B}
  \end{subfigure}
  \begin{subfigure}{.31\textwidth}
    \centering
    \includegraphics[width=1\textwidth]{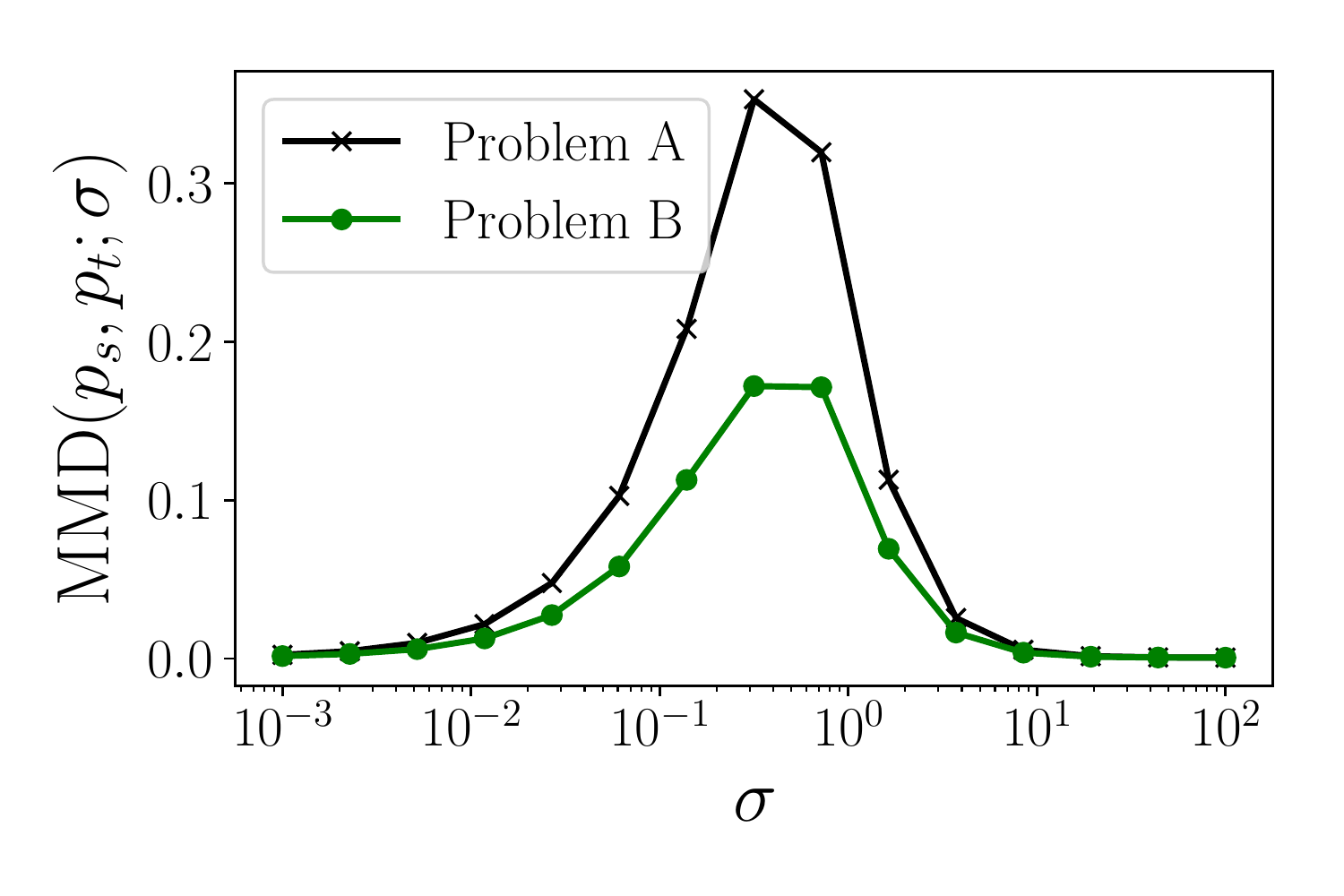}
    \caption{MMD for varying bandwidth, $\sigma$}
  \end{subfigure}
  \caption{\label{fig:loose_example} Examples illustrating the counter-intuitive effects of using density distance metrics for regularizing domain adaptation methods. Despite the fact that sufficient support is satisfied in Problem A, typical adaptation bounds (using e.g. the RBF-kernel MMD, see (c)) are smaller for Problem B than for Problem A. In contrast, our proposed support sufficiency divergence with $\epsilon=0.2$ (see Section~\ref{sec:support}) is 0 in Problem A and 0.33 for Problem B.}
\end{figure*}

\label{sec:costinvariance}
A desired property of adaptation bounds is that they are as tight as possible when Assumption~\ref{asmp:supp} (sufficient support) holds, since consistent estimation is possible in this setting~\citep{ben2012hardness}\footnote{By ``consistency'', we refer to the convergence of an estimate to a quantity of interest given enough samples.}. However, bounds based on Theorem~\ref{thm:benadapt} do not always have this property, and their looseness is often independent of the observed risk on the source domain. We give an example of how unintuitive this can be below.

\begin{thmex}
We illustrate two examples of source and target densities in Figure~\ref{fig:loose_example} along with the estimated maximum mean discrepancy (MMD)~\citep{gretton2012kernel} between domains for a Gaussian RBF-kernel with varying bandwidth $\sigma$. The MMD has been used to bound $d_{\cH\Delta\cH}$ and the target risk  in~\citet{gretton2009covariate,long2015learning,pan2011domain,gong2016domain}, among others. Despite there being a significant \emph{lack of overlap} between the support of source and target domains in Problem B, the MMD is smaller than in Problem A, in which the support of the target domain is completely covered by the source density. However, Problem A satisfies sufficient assumptions for identifiability, whereas Problem B does not. This illustrates a drawback of representation learning methods that penalize distributional distance between domains.
\label{ex:overlap}
\end{thmex}

The problem illustrated in Example~\ref{ex:overlap} has practical consequences, as we see in Section~\ref{sec:exp_invariance}. When label marginal distributions differ in a classification task, but domains partially overlap, requiring domain invariance is often too strict. In fact, in our examples, training using only source labels often does better than domain-invariant representation learning.

\section{A new support-based bound}
\label{sec:support}
We proceed to bound the target risk of an hypothesis in terms of its error on the source domain and the expected lack of sufficient support. This bound is aimed at overcoming limitations of existing bounds by a) explicitly characterizing the risk induced by non-invertible representations and b) avoiding unnecessary side effects of domain invariance.

We say that there is lack of \emph{sufficient support} at a point $x$ if the target density is larger than the source density and the source density is small, as defined by $\delta_{p,q}(x)$,
\begin{equation}
\delta_{p,q}(x) = \mathds{1}[q(x) \geq p(x) \;\mbox{ and }\; p(x) < \epsilon]~. \label{eq:supplack}
\end{equation}
We let $\delta_{s,t}(x)$ serve as short-hand for $\delta_{\ps, \pt}(x)$. Below, we define the \emph{support sufficiency divergence}.
\begin{thmdef} For distributions, $p, q$, the \emph{support sufficiency divergence} from $p$ to $q$ is defined by
$$
\dsuppeps{p}{q} \coloneqq \E_{q}[\delta_{p,q}(x)] - \E_{p}[\delta_{p,q}(x)]
$$
\end{thmdef}

Note that $d^\epsilon_{\supp}$ is not symmetric, but is $0$ for $p=q$. Crucially however, it is 0 also when $p\neq q$ for some choices of $\epsilon$, if $\supp(p) = \supp(q)$. Further, it holds that $0 \leq d^\epsilon_{\supp} \leq 1$ and the bounds are tight (see Appendix~\ref{app:suppbounds} for a proof).

Our main result builds on the idea that we can expect an hypothesis to be accurate on the target domain in regions where the source density is sufficiently high. First, let $w_{p,q}^\epsilon(x)$ be a weighting function such that
\begin{equation}
w_{p,q}^\epsilon(x) = \left\{
\begin{array}{ll}
  q(x)/p(x) & \mbox{ if } p(x) \geq \epsilon \\
  1 & \mbox{ othewrise }
\end{array}
\right.
\end{equation}

We may state the following result.

\begin{thmlem}\label{lem:overlap}
Let $\ps(x), \pt(x)$ be densities over $\cX$. Further, let $\ell : \cX \rightarrow \mathbb{R}_+$ be a function such that $\exists M > 0 : \forall x\in \cX : \ell(x) \in [0, M)$. Then, with $\epsilon > 0$,
\begin{equation*}
\begin{array}{lll}
\En_{\pt}[\ell(x)]  \leq  \underbracet{\En_{\ps}\left[w_{\ps,\pt}^\epsilon(x) \ell(x)\right]}{\emph{Weighted expectation}}  \vspace{.5em}
 +  \underbracet{M \cdot \dsuppeps{\pt}{\ps}}{\emph{Support discrepancy}} ~.
\end{array}
%\label{eq:overlaplem}
\end{equation*}
Equality holds if $\pt = \ps$ or if $\forall x\in \supp(\pt) : \ps(x) = \epsilon$. The second term is 0 if and only if Assumption~\ref{asmp:supp} holds with $\epsilon \leq \inf_{x : \pt(x) \geq \ps(x)} \ps(x) $, by definition. The proof can be found in Appendix~\ref{app:proof_lem_overlap}.
\end{thmlem}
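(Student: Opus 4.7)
The plan is to prove the bound by partitioning $\cX$ into the region where $\ps$ exceeds the threshold $\epsilon$ (and the weighting function switches on) and the complementary region, then showing that the importance-weighted integral and the target expectation agree exactly on the first piece and differ in a controlled way on the second.

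First I would split $\cX = A \cup B$ with $A = \{x : \ps(x) \geq \epsilon\}$ and $B = \{x : \ps(x) < \epsilon\}$. By the definition of $w^\epsilon_{\ps,\pt}$, we have $w^\epsilon_{\ps,\pt}(x)\,\ps(x) = \pt(x)$ for $x\in A$ and $w^\epsilon_{\ps,\pt}(x)\,\ps(x) = \ps(x)$ for $x\in B$. Writing both $\En_{\pt}[\ell]$ and $\En_{\ps}[w^\epsilon_{\ps,\pt}\,\ell]$ as integrals over $A \cup B$ and subtracting, the contributions over $A$ cancel exactly, leaving
\[
\En_{\pt}[\ell(x)] \;-\; \En_{\ps}\!\left[w^\epsilon_{\ps,\pt}(x)\,\ell(x)\right] \;=\; \int_{B} \bigl(\pt(x) - \ps(x)\bigr)\,\ell(x)\,dx.
\]

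Next I would bound this residual using $0 \leq \ell(x) < M$. The key move -- and the only step that requires care -- is to drop the subset of $B$ where $\pt(x) < \ps(x)$, since there $\ell \geq 0$ forces the integrand to be nonpositive; doing so only inflates the integral. On the remaining set one has $\pt - \ps \geq 0$, so $\ell < M$ gives
\[
\int_B (\pt - \ps)\,\ell\,dx \;\leq\; M \int_\cX \mathds{1}\!\left[\ps(x) < \epsilon,\ \pt(x) \geq \ps(x)\right] \bigl(\pt(x) - \ps(x)\bigr)\,dx.
\]
The indicator inside is precisely $\delta_{\ps,\pt}(x)$ from the definition of the support sufficiency divergence, and the integral rearranges to $\En_{\pt}[\delta_{\ps,\pt}] - \En_{\ps}[\delta_{\ps,\pt}]$, matching $M$ times the support sufficiency divergence between source and target in the statement.

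Finally I would verify the tightness and vanishing claims by tracing when each inequality is an equality. If $\pt = \ps$ the residual on $B$ vanishes identically; if $\ps(x) = \epsilon$ on $\supp(\pt)$ then $\delta_{\ps,\pt}\equiv 0$ on $\supp(\pt)$ and $w^\epsilon_{\ps,\pt}\ps = \pt$ wherever it matters. The support term equals zero iff the indicator $\delta_{\ps,\pt}$ is identically zero, i.e.\ whenever $\pt(x) \geq \ps(x)$ one has $\ps(x) \geq \epsilon$, which is exactly Assumption~\ref{asmp:supp} under the stated infimum bound on $\epsilon$. The only conceptual obstacle is recognizing that one must discard the nonpositive part of the residual \emph{before} applying $\ell < M$; a naive two-sided bound $|\ell| \leq M$ would yield the total-variation-like $\int_B |\pt - \ps|\,dx$ instead of the one-sided, asymmetric support divergence.
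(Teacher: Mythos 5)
Your proposal is correct and follows essentially the same route as the paper's proof in Appendix A.2: split $\cX$ at the threshold $\ps(x)\geq\epsilon$, observe that the importance-weighted expectation reproduces $\E_{\pt}[\ell]$ exactly on that region, and then bound the residual $\int_{\ps<\epsilon}(\pt-\ps)\ell$ by discarding the nonpositive part where $\pt<\ps$ before applying $\ell< M$, which is precisely the one-sided step that yields $\dsuppeps{\ps}{\pt}$ rather than a total-variation term. Your derived divergence $\E_{\pt}[\delta_{\ps,\pt}]-\E_{\ps}[\delta_{\ps,\pt}]$ matches what the paper's appendix actually proves (and what Theorem~\ref{thm:repbound} uses), so no gap there.
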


Before we state our main result, we define a measure of the impact of non-invertibility in representations.

\begin{thmdef}\label{def:infoloss}%
Given are domains $\ps$ and $\pt$, a prediction function $f\in \cF$, a label $Y$, a loss $\ell$ and a representation $Z = \phi(X)$. Let
$$
\Delta_{q,p}(x) := \E_{q(y\mid \phi(x))}[\ell(f(\phi(x)), y)] - \E_{p(y\mid x)}[\ell(f(\phi(x)), y)]
$$
Then, the \emph{excess target information loss} is
\begin{align*}
 \eta^\ell_\phi(f, Y)
& = \E_{\pt(x)}\left[ \Delta_{\pt,p}(x) - \Delta_{\ps,p}(x) \right]
\end{align*}
We say that the information loss induced by the representation $\phi(x)$ is \emph{symmetric} if $\eta^\ell_\phi(f, y) = 0$. Both $\Delta$ and $\eta$ are always 0 for invertible $\phi$. Note also that $\eta$ may be negative, although we don't expect this in practice as we explain later.
\end{thmdef}

By Lemma~\ref{lem:overlap} and Definition~\ref{def:infoloss}, we have the following.

\begin{thmthm}\label{thm:repbound}
Consider any feature representation $z = \phi(x)$ with $\phi \in \cG$ and prediction function $f \in \cF$, and define $h = f \circ \phi$. Further, let $\ps(Z)$ and $\pt(Z)$ be the two distributions induced by the representation $\phi$ applied to $X$ distributed according to $\ps(X), \pt(X)$.  Further, assume that for any hypothesis $h\in \cH$ and a loss function $\ell$, $\sup_{x\in \cX, y\in \cY, h\in \cH}[\ell(h(x),y)] \leq M$. For any $\epsilon > 0$,
\begin{align}\label{eq:mainthm}
R_t(f \circ \phi) & \leq \underbracet{\En_{\ps}\left[w_{\ps, \pt}^\epsilon(z) \ell(f(z), y)\right]}{\emph{Observable}} \\
& + M\underbracet{\dsuppeps{\ps(z)}{\pt(z)}}{\emph{Observable}}  + \underbracet{\eta^\ell_\phi(f, y)}{\emph{Unobservable}}~. \nonumber
\end{align}
\end{thmthm}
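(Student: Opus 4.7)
My plan is to reduce to Lemma~1 applied in representation space, then absorb the gap between source-conditional and target-conditional loss into $\eta^\ell_\phi$.

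Under Assumption~1 (covariate shift), I write the expected loss of $f$ at a fixed $z$ under each domain's induced conditional as $g_q(z) := \E_{q(y\mid z)}[\ell(f(z), y)]$ for $q \in \{\ps, \pt\}$, where $q(y\mid z)$ is the conditional of $y$ given $z=\phi(x)$ induced by $q(x,y)$. Because $\phi$ is deterministic and induces no atoms, the tower property gives $R_t(f\circ \phi) = \E_{\pt(z)}[g_\pt(z)]$. Adding and subtracting $g_\ps$,
\begin{equation*}
R_t(f\circ \phi) = \E_{\pt(z)}[g_\ps(z)] + \E_{\pt(z)}[g_\pt(z) - g_\ps(z)].
\end{equation*}
The second summand is exactly $\eta^\ell_\phi(f, Y)$: from Definition~3, $\Delta_{\pt,p}(x) - \Delta_{\ps,p}(x) = g_\pt(\phi(x)) - g_\ps(\phi(x))$, and taking expectation under $\pt(x)$ coincides with expectation of $g_\pt(z) - g_\ps(z)$ under $\pt(z)$ by the tower property.

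Next, I apply Lemma~1 to $g_\ps$, which is bounded by $M$ since $\ell \le M$ uniformly, with source $\ps(z)$ and target $\pt(z)$, obtaining
\begin{equation*}
\E_{\pt(z)}[g_\ps(z)] \leq \E_{\ps(z)}\!\left[w_{\ps,\pt}^\epsilon(z)\, g_\ps(z)\right] + M\,\dsuppeps{\ps(z)}{\pt(z)}.
\end{equation*}
Finally, I rewrite the weighted source expectation as a joint expectation over $(x,y)\sim \ps$: by the tower property over $\ps(z,y)$ and the change of variables induced by $\phi$,
\begin{equation*}
\E_{\ps(z)}\!\left[w_{\ps,\pt}^\epsilon(z)\, g_\ps(z)\right] = \E_{(x,y)\sim \ps}\!\left[w_{\ps,\pt}^\epsilon(\phi(x))\, \ell(f(\phi(x)), y)\right],
\end{equation*}
which matches the observable source term in the theorem. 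Chaining the three displays gives the stated bound.

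The main obstacle I anticipate is the bookkeeping for the non-invertible change of variables: since $\phi$ need not be one-to-one, the conditional $\ps(y\mid z)$ must be defined through marginalization over the fiber $\phi^{-1}(z)$, and I must justify the identity $\E_{\ps(z)}[h(z)\,g_\ps(z)] = \E_{(x,y)\sim \ps}[h(\phi(x))\,\ell(f(\phi(x)),y)]$ for any measurable weight $h$; the ``no atoms'' assumption on $\phi$ together with the push-forward formula handles this cleanly. A secondary subtlety is confirming that the orientation of $\dsuppeps{\cdot}{\cdot}$ produced by Lemma~1 is $\dsuppeps{\ps(z)}{\pt(z)}$ as stated in the theorem, which follows directly from applying Lemma~1 with $p=\ps(z),\, q=\pt(z)$ in the role of source and target respectively.
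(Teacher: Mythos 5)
Your proof is correct and follows essentially the same route as the paper's: decompose $R_t(f\circ\phi)$ into the source-conditional term $\E_{\pt(z)}[g_\ps(z)]$ plus $\eta^\ell_\phi$ (the paper's Lemma~\ref{lem:helper2}), then bound the first term by the weighted source expectation plus $M\,\dsuppeps{\ps(z)}{\pt(z)}$ via the split on $\{\ps(z)\geq\epsilon\}$ — which is exactly Lemma~\ref{lem:overlap} applied in representation space, as you do. Your handling of the non-invertible change of variables and of the argument order in $d^\epsilon_{\supp}$ matches the appendix proof.
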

\begin{thmproofsketch}
For any $h = f \circ \phi$, we have that
$$
E_{\pt(x,y)}[\ell(h(x),y)] = E_{\pt(z,y)}[\ell(f(z),y)]~.
$$
By adding and subtracting $E_{\pt(z)\ps(y\mid z)}[\ell(f(z),y)]$,
\begin{align*}
& E_{\pt(z,y)}[\ell(f(z),y)] = E_{\pt(z)\ps(y\mid z)}[\ell(f(z),y)] \\
& + E_{\pt(z)\pt(y\mid z)}[\ell(f(z),y)] - E_{\pt(z)\ps(y\mid z)}[\ell(f(z),y)]
\end{align*}
The last two terms equal $\eta^\ell_\phi(h, y)$ as $\ps(y\mid x) = \pt(y\mid x)$ by Assumption~\ref{asmp:cshift}. Note that the marginal density over $z$ is equal in both of the last terms. The first term may be decomposed by the support of $\ps$. With $L_s(z) = E_{\ps(y\mid z)}[\ell(f(z),y)\mid Z=z]$, we get
\begin{align*}
& (*) := E_{\pt(z)\ps(y\mid z)}[\ell(f(z),y)] \\
& = \int_{z : \ps(z) \geq \epsilon} \pt(z) L_s(z) dz
 + \int_{z : \ps(z) < \epsilon} \pt(z) L_s(z) dz
\end{align*}
Adding and subtracting $\int_{z : \ps(z) < \epsilon} \ps(z) L_s(z) dz$, we get
\begin{align*}
& (*) = \En_{\ps}\left[w_{\ps,\pt}^\epsilon(z) \ell(f(z), y)\right] \\
& + \int_{z : \substack{\ps(z) < \epsilon \\ \ps \leq \pt}} \underbrace{(\pt(z) - \ps(z))L_s(z)}_{\geq 0} dz \\
& + \underbrace{\int_{z : \substack{\ps(z) < \epsilon \\ \ps > \pt}} (\pt(z) - \ps(z))L_s(z) dz}_{\leq 0} ~.
\end{align*}
Bounding the second term by $\dsuppeps{\ps}{\pt}$ and removing the third non-positive term, we obtain the result. For a full proof, see Appendix~\ref{app:proof_repbound}.\qed
\end{thmproofsketch}

Theorem~\ref{thm:repbound} is consistent with our intuition that increasing the sufficiency of the support of $\ps$ for $\pt$ leads to better adaptation. If this overlap is increased without losing information, such as through collection of additional samples, this is usually preferable.

Unlike bounds based on the triangle inequality ~\citep{ben2010theory,mansour2009domain,cortes2011domain}, the bound in Theorem~\ref{thm:repbound} is tight when $\ps(X) = \pt(X)$. On the other hand, when the supports of $\ps(Z)$ and $\pt(Z)$ are completely disjoint, the bound is non-informative. In Section~\ref{sec:kernel} we obtain a tighter bound for the disjoint case by incorporating additional assumptions. In many problems, however, there is partial overlap, such as under label marginal shift.

For domains with common and bounded support, $\epsilon$ may be chosen such that minimizing the bound of Theorem~\ref{thm:repbound} reduces to importance sampling. In fact, we may view Theorem~\ref{thm:repbound} as a middle-ground between importance sampling estimates and upper bounds on the target risk, using importance sampling where feasible. The choice of $\epsilon$ in Lemma~\ref{thm:repbound} trades off the sizes of the two middle terms in \eqref{eq:mainthm}---small $\epsilon$, larger first term and vice versa. Additionally, if $\ell(x)$ is 0 everywhere on $\supp(p)$, the first term is 0. The choice of $\epsilon$ also affects the variance in Monte-Carlo estimates of these terms. If $\epsilon$ is close to $0$, the weights $w_{\ps, \pt}^\epsilon(z)$ are potentially larger, and variance increases~\citep{cortes2010learning}.

When $\phi$ is invertible, $\eta^\ell_\phi(f, y) = 0$ as $\pt(y\mid \phi(x)) = \pt(y\mid x)$. \citet{shalit2016estimating} gave a bound based on integral probability metrics in the style of Theorem~\ref{thm:benadapt}, with the additional restriction that $\phi$ is invertible. However, this is a strong restriction as such $\phi$ cannot increase the sufficiency of support w.r.t. $\ps(z)$ and $\pt(z)$. We conjecture that under appropriate assumptions of smoothness, $\eta$ is larger for less invertible $\phi$. By encouraging $\phi$ to be near-invertible, this is mitigated. This would serve as justification for reconstruction losses used by for example~\citet{bousmalis2016domain}. Alternatively, $\eta^\ell_\phi(f, y) = 0$ if any information lost in $\phi$ is equally important for predicting labels in the source domain as in the target. If $\eta = 0$ is always true, Assumption~\ref{asmp:domaininv} is sufficient for identification of the label.

\subsection{Incorporating assumptions on the loss}
\label{sec:kernel}
In Theorem~\ref{thm:repbound}, the loss at points outside of the overlap between domains is bounded from above by a constant, $M$. As a result, the bound is uninformative for disjoint domains. If prior knowledge about the label function is available, we may address this by making assumptions about how the label function extrapolates, akin to Theorem~\ref{thm:benadapt}.
Below, we give an alternative bound based on an assumption that the loss $\ell(f \circ \phi)$ of hypotheses using a representation $\phi$ belongs to a known family $\cL$. Critically, this new bound remains qualitatively different from previous work as a) it penalizes extrapolation between domains \emph{only in regions where the source density is low} and b) it explicitly characterizes the excess target risk due to information lost in the learned representation.

\begin{thmdef}
We define the \emph{integral probability metric (IPM) support sufficiency divergence} between densities $p,q$ on $\cX$ with respect to a class of functions $\cL$ by
\begin{equation}
\dsuppepsk{p}{q}{\cL} \coloneqq \sup_{\ell \in \cL} \left| \E_{q}[\delta_{p}(x)\ell(x)] - \E_{p}[\delta_{p}(x)\ell(x)] \right|
\label{eq:kerneldist}
\end{equation}
where $\delta_{p}(x) = \mathds{1}[p(x) < \epsilon]$.
\end{thmdef}

\begin{thmthm}\label{thm:repboundk}
Assume that for any representation $\phi \in \cG$, and any $f\in \cF$,
$
\E_{\ps(y\mid \phi(x)}[\ell{f(\phi(x), y)}] \in \cL
$.
Under the conditions of Theorem~\ref{thm:repbound}, we have
\begin{align*}
R_t(f \circ \phi) & \leq \underbracet{\En_{\ps}\left[w_{\ps, \pt}^\epsilon(z) \ell(f(z), y)\right]}{\emph{Observable}} \\
& + \underbracet{\dsuppepsk{\ps(z)}{\pt(z)}{\cL}}{\emph{Observable}}  + \underbracet{\eta^\ell_\phi(f, y)}{\emph{Unobservable}}~.
\end{align*}
\end{thmthm}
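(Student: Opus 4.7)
The plan is to mirror the proof structure of Theorem~\ref{thm:repbound} up to its final step, and then replace the crude $M$-bound on the low-source region by the supremum appearing in the IPM definition. First I would write $R_t(f\circ\phi) = \E_{\pt(z,y)}[\ell(f(z),y)]$ using that $\phi$ is deterministic, then add and subtract $\E_{\pt(z)\ps(y\mid z)}[\ell(f(z),y)]$. Under Assumption~\ref{asmp:cshift}, the difference between the $\pt$-conditional and $\ps$-conditional inner expectations collapses into $\eta^\ell_\phi(f,y)$ exactly as in the proof sketch of Theorem~\ref{thm:repbound}, isolating the unobservable term.

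Next I would analyze the remaining quantity $(*) := \E_{\pt(z)\ps(y\mid z)}[\ell(f(z),y)] = \int \pt(z)\,L_s(z)\,dz$, where $L_s(z) := \E_{\ps(y\mid z)}[\ell(f(z),y)]$. Splitting this integral into the regions $\{z : \ps(z) \geq \epsilon\}$ and $\{z : \ps(z) < \epsilon\}$, and then adding and subtracting $\int_{z : \ps(z) < \epsilon} \ps(z)\,L_s(z)\,dz$, the pieces regroup into
\begin{align*}
(*) = \E_{\ps}\!\left[w^\epsilon_{\ps,\pt}(z)\,\ell(f(z),y)\right] + \int \delta_{\ps}(z)\bigl(\pt(z)-\ps(z)\bigr) L_s(z)\,dz,
\end{align*}
where $\delta_{\ps}(z) = \mathds{1}[\ps(z) < \epsilon]$ is precisely the indicator used in the definition of $\dsuppepsk{\cdot}{\cdot}{\cL}$.

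The step that departs from Theorem~\ref{thm:repbound} is the bound on the residual integral. By the theorem's hypothesis that $z \mapsto \E_{\ps(y\mid z)}[\ell(f(z),y)] = L_s(z)$ lies in $\cL$, the integral equals $\E_{\pt}[\delta_{\ps}(z)L_s(z)] - \E_{\ps}[\delta_{\ps}(z)L_s(z)]$, which is at most its absolute value and therefore bounded by $\sup_{\ell\in\cL}\bigl|\E_{\pt}[\delta_{\ps}(z)\ell(z)] - \E_{\ps}[\delta_{\ps}(z)\ell(z)]\bigr| = \dsuppepsk{\ps(z)}{\pt(z)}{\cL}$. Combining this with the previously extracted $\eta^\ell_\phi(f,y)$ and the weighted expectation term yields the claimed inequality.

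The only delicate point, and the one worth flagging explicitly in the final write-up, is the containment $L_s \in \cL$: the theorem posits it as an assumption on the family $\cL$ chosen to define the IPM, so the obstacle is really a modelling choice rather than a technical hurdle. An attractive feature of this route is that, unlike the proof of Theorem~\ref{thm:repbound}, we do not need the auxiliary case split over $\{\ps \leq \pt\}$ versus $\{\ps > \pt\}$, because the absolute value inside the supremum in \eqref{eq:kerneldist} absorbs both signs of $\pt(z)-\ps(z)$ on the low-density region.
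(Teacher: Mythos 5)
Your argument is correct and is precisely the intended route: the paper in fact omits an explicit proof of Theorem~\ref{thm:repboundk}, leaving it as the evident modification of the proof of Theorem~\ref{thm:repbound}, and your write-up supplies exactly that—reuse the $\eta^\ell_\phi$ extraction and the $\epsilon$-split, then bound the residual $\E_{\pt}[\delta_{\ps}(z)L_s(z)] - \E_{\ps}[\delta_{\ps}(z)L_s(z)]$ by the supremum in \eqref{eq:kerneldist} using the hypothesis $L_s \in \cL$. Your observation that the sign-based case split from Theorem~\ref{thm:repbound} becomes unnecessary because the absolute value in the IPM absorbs both signs of $\pt(z)-\ps(z)$ is also accurate.
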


\begin{thmrem}
Theorem~\ref{thm:repboundk} provides a tighter bound than Theorem~\ref{thm:repbound} at the cost of stronger assumptions.
With $M \geq \sup_{x \in \cX, y\in\cY, \ell\in \cL} \ell(h(x),y)$, and $\ell > 0$,
$$
\dsuppepsk{p}{q}{\cL} \leq M \max\{\dsuppeps{p}{q}, \dsuppeps{q}{p}\} \leq M.
$$
For the first inequality to be tight, the maximizer $\ell^*$ of \eqref{eq:kerneldist} must be flexible enough to always be equal to $M$ when $q>p$ and always equal to $0$ when $q<p$. This is unlikely to be true of the actual loss when the supports of $p$ and $q$ overlap. Instead, it is common to assume that $\ell^*$ obeys some smoothness conditions. In Appendix~\ref{app:proof_kernel} we show how $\dsuppepsk{p}{q}{\cL}$ may be estimated using kernel evaluations if $\cL$ is a reproducing-kernel Hilbert space, following~\citet{gretton2012kernel}.
\end{thmrem}

\section{Empirical results}
\begin{figure}[tbp!]
  \centering
  \vspace{1em}
  \begin{subfigure}{.4\columnwidth}
    \centering
    \includegraphics[width=.6\textwidth]{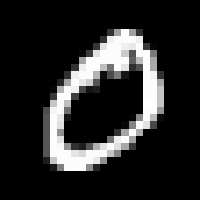}
    \caption{\label{fig:mnist}``0'' in MNIST}
  \end{subfigure}
  \;
  \begin{subfigure}{.4\columnwidth}
    \centering
    \includegraphics[width=.6\textwidth]{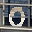}
    \caption{\label{fig:mnistm}``0'' in MNIST-M}
  \end{subfigure}
  \caption{\label{fig:mnistvsmnistm}A benchmark for domain adaptation: MNIST$\rightarrow$MNIST-M~\citep{ganin2015unsupervised}.}
\end{figure}

We revisit previous empirical results in light of our theoretical findings with emphasis on Domain-Adversarial Neural Networks (DANN) by~\citep{ganin2016domain}\footnote{Our implementation is based on that of  \url{https://github.com/pumpikano/tf-dann}}.

\begin{figure*}[t!]
  \centering
  \begin{subfigure}{.35\textwidth}
    \centering
    \includegraphics[width=\textwidth]{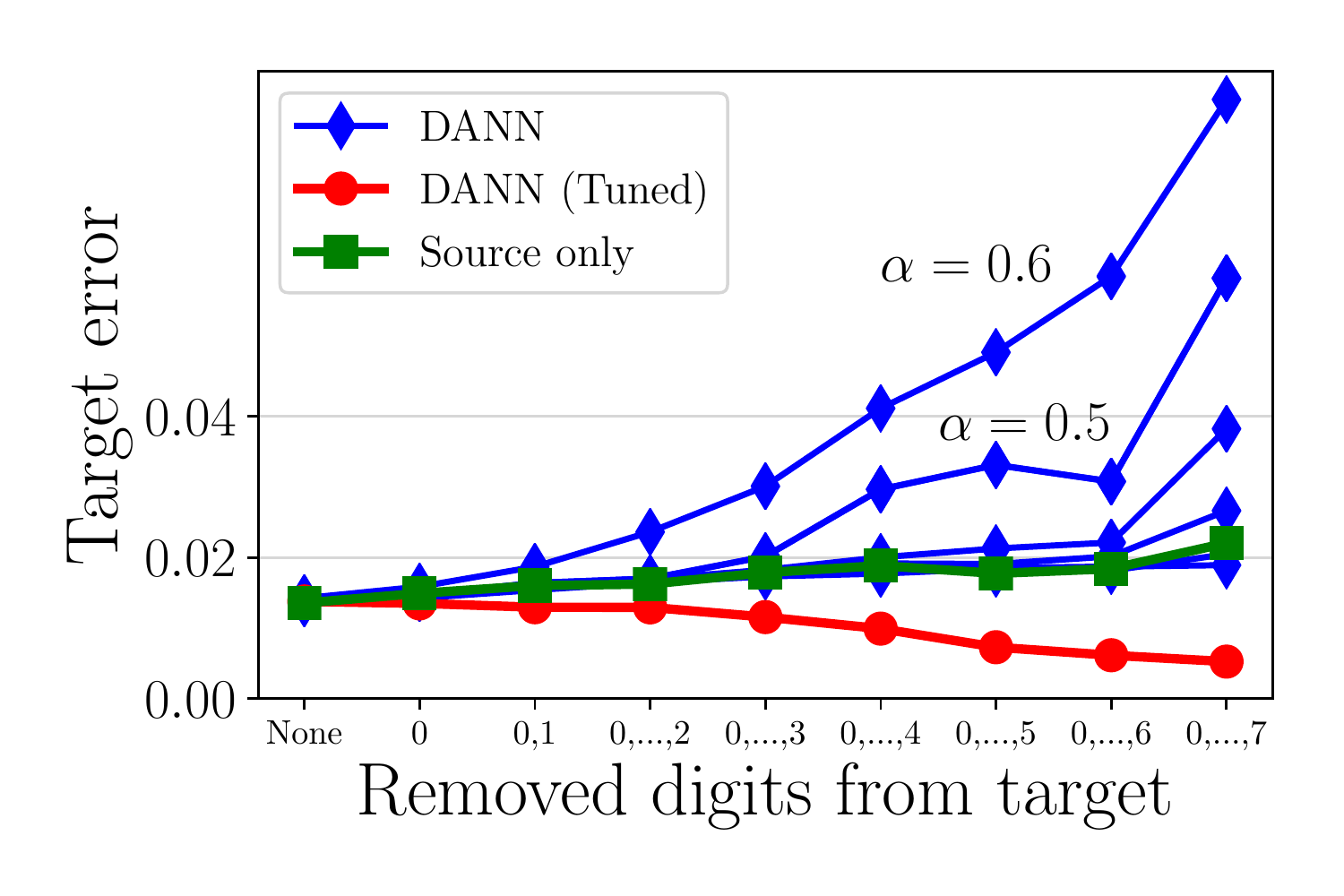}
    \caption{MNIST$\rightarrow$MNIST}
  \end{subfigure}
  \begin{subfigure}{.35\textwidth}
    \centering
    \includegraphics[width=\textwidth]{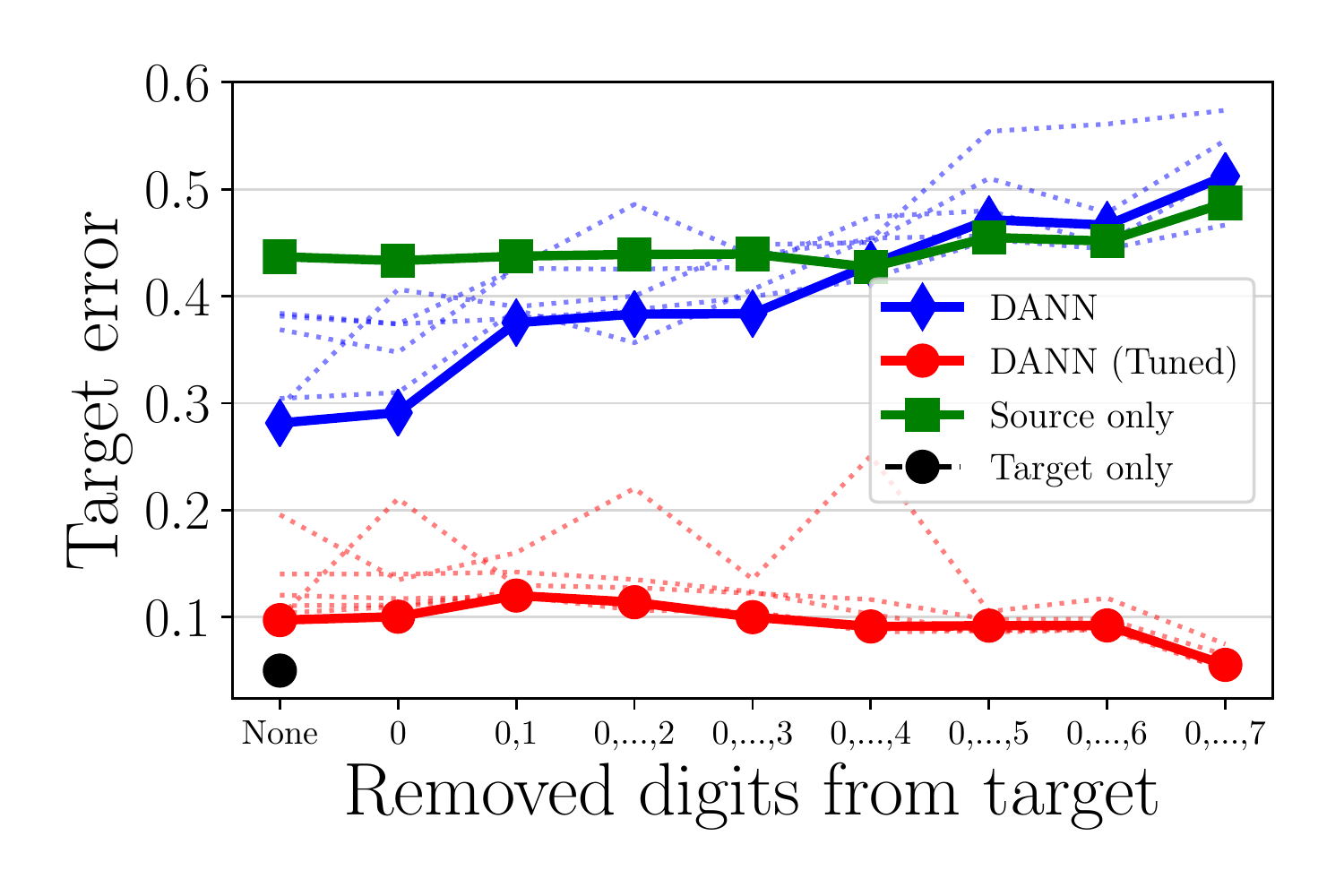}
    \caption{MNIST$\rightarrow$MNIST-M}
  \end{subfigure}
  \begin{subfigure}{.28\textwidth}
    \centering
    \vspace{.5em}
    \includegraphics[width=\textwidth]{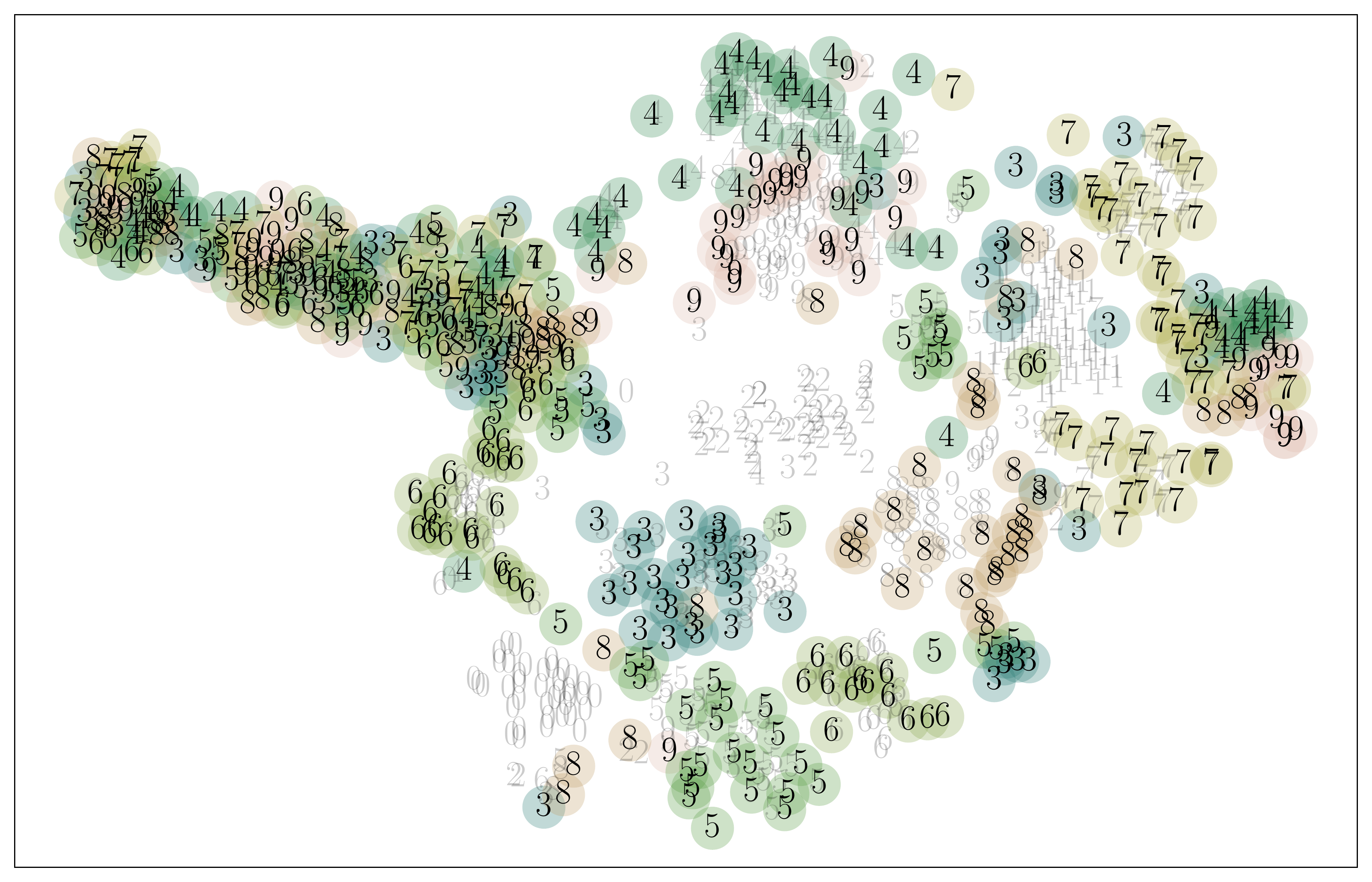}
    \vspace{.5em}
    \caption{\label{fig:embeddings0}MNIST$\rightarrow$MNIST-M$\setminus\{0,1,2\}$}
  \end{subfigure}
  \caption{\label{fig:labelshift}Left: Target error as a function of marginal label distribution. For each setting, a DANN model is trained on unlabeled target data and labeled source data. We compare the accuracy of this model to a model tuned on target labels but with a fixed representation given by the first model. Different lines of the same color indicate different values of penalty strength $\alpha \in \{0.0, ..., 0.6\}$. Right: Embeddings learned by DANN with equal (top) and unequal (bottom) label marginal distributions. In MNIST-M$\setminus\{0,1,2\}$, all images of digits 0,1,2 have been removed. Grey digits are from the source domain and black digits from the target domain. }
  %\fxfatal{Check caption. c) hard to read in b\&w}
\end{figure*}

\subsection{Plausibility of sufficient assumptions}
The most common benchmarks for domain adaptation algorithms are computer vision and natural language processing tasks. One example is the $\text{MNIST}\rightarrow\text{MNIST-M}$ task~\citep{ganin2015unsupervised}, in which the goal is to learn to classify handwritten digits overlayed with random photographs (MNIST-M) based on labeled images of digits alone (MNIST)~\citep{lecun1998gradient} (see Figure~\ref{fig:mnistvsmnistm}). For this task, we can immediately rule out Assumption~\ref{asmp:supp} of sufficient support, as MNIST-M images are full-color images that have measure 0 in MNIST. Still, previous work have achieved target accuracy of $>55\%$ when training on source data alone, and $>80\%$ when using unlabeled target data, compared to $>95\%$ when using labeled target data~\citep{ganin2015unsupervised,bousmalis2016domain}.
These results support Assumption~\ref{asmp:domaininv}---that there exists a  domain-invariant representation in which the labeling function is approximately realizable.

%
%In conclusion, no assumptions known to be sufficient for identification hold in benchmark tasks, yet empirical results are promising.

\subsection{Contrasting support and domain variance}
\label{sec:exp_invariance}
When label marginal distributions differ under covariate shift, $\ps(Y) \neq \pt(Y), \ps(Y\mid X) = \pt(Y\mid X)$, such as when objects of a certain class appear more often in one domain, a distance between feature marginals, $\ps(X), \pt(X)$, is induced. Authors have studied this restricted setting in detail~\citep{zhang2013domain,lipton2018detecting}. If additionally the target domain is made up of a subset of the source domain, encouraging domain invariance may cause more harm than good. We study a) the performance of DANN models under domain shift with sufficient support, and b) the realizability of the label  in the learned representation.

We create a task in which the source domain is the standard MNIST dataset and the target domain is a version of MNIST for which domain shift is induced by successively removing digit classes from the support of the target domain, leaving the source domain fixed. In this setup, the support of the target domain is contained in the source domain, and empirical risk minimization based on source data alone should be a good baseline. We compare to the case where the target is replaced by MNIST-M, but perturbed in the same way.

%We consider the MNIST$\rightarrow$MNIST-M task, the standard version of which contains one perturbed image in MNIST-M for every image in the MNIST dataset---the two datasets have the same label marginal distribution. To introduce label marginal shift, we successively remove digit classes from the support of the target domain, leaving the source domain fixed.

The DANN model optimizes \eqref{eq:discmin}, with $d$ an adversarial neural network classifying images by domain, and $\alpha$ a hyperparameter  controlling the strength of this penalty in the objective
$
O = 2(1-|\alpha-0.5|)((1-\alpha) \hat{R}_s + \alpha d)~.
$
In this way, we interpolate between empirical risk minimization ($\alpha=0$), the standard DANN formulation ($\alpha=0.5$) and prioritizing domain invariance ($\alpha > 0.5$). We compare the error of two different models: 1) The standard DANN estimator $h_{dann} = f_{dann} \circ \phi_{dann}$, and 2) A model $h_{tuned} = f_{tuned} \circ \phi_{dann}$ in which the learned representation $\phi_{dann}$ from 1) is fixed and the prediction function $f$ is fit to the \emph{target labels} (Tuned). The latter serves to give an upper bound on best-case risk when predicting from the representations learned by DANN.

In Figure~\ref{fig:labelshift}, we observe that models trained without target supervision (DANN) perform steadily worse on MNIST$\rightarrow$MNIST, the more the label marginal distribution is perturbed. This holds also for MNIST$\rightarrow$MNIST-M, where sufficient support is not satisfied. There,  DANN is beneficial for small label shift, but eventually does no better than a model trained using only source data.
Learning with a domain-adversarial loss appears to have little impact on the \emph{realizability} of the target label in the representation; the target-tuned models achieve almost as good performance as the fully target-trained lower bound. In Figure~\ref{fig:embeddings0}, we see that the embeddings learned using DANN models under label marginal shift show worse separation between classes, than the embeddings learned under equal label marginal distributions (see Appendix~\ref{app:experiments}).

\section{Discussion}
\label{sec:discussion}

We have studied algorithms for unsupervised domain adaptation based on domain-invariant representation learning and the theoretical arguments used to support them. We find that, despite empirical success, the theoretical justification of these algorithms is flawed in that oft-cited generalization bounds are not minimized by the learned representations. In particular, the literature has failed to characterize conditions under which domain-invariant representations lead to consistent estimation. We have found through examples and experiments on domain adaptation benchmarks that domain invariance is often too strong a requirement for learning, both when there is overlap between domains and when there is not. This stems from the fact that overlapping support is sufficient for domain transfer, and equality in densities is not necessary.

We have proposed alternative bounds that measure distance in support instead of density and that explicitly recognize loss incurred by non-invertible representations. Our bounds suggest several ways to design new algorithms. First, minimizing the second term in our bound, the support sufficiency divergence, may be achieved by replacing indicator functions by hinge losses (see Appendix~\ref{app:model}). This increases the looseness of the bound, but makes its derivative informative. In the same spirit, we may design new heuristics that regularize representations only in points at which the source density is much smaller than the target density.
Second, while the excess adaptation error induced by learning non-invertible transformations is unobservable, it is associated with the information loss of the representation. To avoid this, we may attempt to maintain a small excess by imposing a reconstruction loss on the representation, similar to~\citet{bousmalis2016domain}.

\section*{Acknowledgements}
We thank Zach Lipton, Alexander D'Amour, Christina X Ji and Hunter Lang for insightful feedback. This work was supported in part by Office of Naval Research Award
No. N00014-17-1-2791 and the MIT-IBM Watson AI Lab.

\bibliographystyle{jmb}
\bibliography{da}

\appendix
%
% Appendix
%

\section{Proofs}

\subsection{Proof of bounds for support sufficiency divergence}
\label{app:suppbounds}

\begin{thmlem}
  The support sufficiency divergence is bounded with $0 \leq \dsuppeps{p}{q}   \leq 1$, and the bounds are tight.
\end{thmlem}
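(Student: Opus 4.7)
My plan is to rewrite the support sufficiency divergence as a single integral over the set on which $\delta_{p,q}(x)=1$, so both bounds become essentially trivial. Define
$$
A := \{x : q(x) \geq p(x) \text{ and } p(x) < \epsilon\},
$$
so that $\delta_{p,q}(x) = \mathds{1}[x \in A]$. By linearity of expectation,
$$
\dsuppeps{p}{q} \;=\; \int_A q(x)\,dx \;-\; \int_A p(x)\,dx \;=\; \int_A \bigl(q(x)-p(x)\bigr)\,dx.
$$

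For the lower bound, I would observe that on $A$ the integrand $q(x)-p(x)$ is non-negative by the very definition of $A$, so the integral is $\geq 0$. For the upper bound, I would use $\int_A q(x)\,dx \leq \int q(x)\,dx = 1$ and $\int_A p(x)\,dx \geq 0$, which together give $\dsuppeps{p}{q} \leq 1$.

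For tightness of the bounds, the lower bound is attained trivially when $p = q$ (the integrand vanishes pointwise). For the upper bound, I would exhibit two densities $p, q$ with disjoint supports (for instance, point masses smoothed to densities on two disjoint intervals): on $\supp(q)$ we have $p(x) = 0 < \epsilon$ and $q(x) \geq 0 = p(x)$, so $\supp(q) \subseteq A$, giving $\int_A q\,dx = 1$; on $\supp(p)$ we have $q(x) = 0 < p(x)$, so $\supp(p) \cap A = \emptyset$ and $\int_A p\,dx = 0$. Hence $\dsuppeps{p}{q} = 1$.

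I do not anticipate any substantive obstacle here; the only minor care is in noting that points outside $\supp(p) \cup \supp(q)$ may lie in $A$ but contribute zero measure to both integrals, and that the densities in the tightness example can be taken to be proper densities (not Dirac masses) supported on disjoint intervals so the construction is valid in the setting of the paper.
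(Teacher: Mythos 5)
Your proof is correct and follows essentially the same route as the paper's: both rewrite $\dsuppeps{p}{q}$ as $\int (q(x)-p(x))\,\delta_{p,q}(x)\,dx$, note the integrand is non-negative where $\delta_{p,q}=1$, bound above by $1$ using $\delta_{p,q}\leq 1$, and exhibit disjointly supported distributions for tightness of the upper bound. The only cosmetic differences are that the paper attains the lower bound via $\epsilon \leq \inf_x p(x)$ rather than $p=q$, and uses a two-state discrete example rather than disjoint intervals; both variants are fine.
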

\begin{proof}
  The lower bound holds and is tight because
  \begin{align*}
    \dsuppeps{p}{q} = \int_x (q(x) - p(x))\delta_{p,q}(x) \\
    = \int_x \max(q(x) - p(x), 0)\mathds{1}[p(x) \leq \epsilon]\mathds{1}[p(x) \leq \epsilon]~.
  \end{align*}
  which is clearly non-negative. Moreover, for $\epsilon \leq \inf_x p(x)$, $\dsuppeps{p}{q} = 0$. The upper bound holds trivially as $\delta_{p,q}(x) \leq 1$. For tightness, let $q,p$ be discrete densities over two states, $q = [1. , 0.]$ and $p = [0., 1.]$. Then with $\epsilon > 0$, $\dsuppeps{p}{q} = 1$.
\end{proof}

Recall that
\begin{equation}
w_{p,q}^\epsilon(x) = \left\{
\begin{array}{ll}
  q(x)/p(x) & \mbox{ if } p(x) \geq \epsilon \\
  1 & \mbox{ otherwise }
\end{array}
\right.
\end{equation}

\subsection{Proof of Lemma~\ref{lem:overlap}}
\label{app:proof_lem_overlap}
\begin{thmlem}
Let $p,q$ be densities over $\cX$. Further, define $\delta_{p,q}(x) = \mathds{1}[p(x)\leq \epsilon \;\mbox{ and }\; p(x)\leq q(x)]$. Then,
\begin{align*}
\En_q[f(x)] & \leq \En_p\left[w_{p,q}^\epsilon(x) f(x)\right] \\
& + M \cdot \left( \En_q[\delta^\epsilon_{p,q}(x)] - \En_p[\delta^\epsilon_{p,q}(x)]\right)
\end{align*}
\end{thmlem}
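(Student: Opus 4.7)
The plan is to split the integral defining $\En_q[f(x)]$ by the support-sufficiency threshold $\epsilon$, convert the ``good'' region into a $p$-expectation via the importance weight, and upper-bound the ``bad'' region using the boundedness of $f$. Concretely, I would partition $\cX$ into $A_+ = \{x : p(x) \geq \epsilon\}$ and $A_- = \{x : p(x) < \epsilon\}$, and write
\begin{equation*}
\En_q[f(x)] = \int_{A_+} f(x) q(x)\, dx + \int_{A_-} f(x) q(x)\, dx.
\end{equation*}

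On $A_+$, multiply and divide by $p(x)$, so that the integrand becomes $w^\epsilon_{p,q}(x) f(x) p(x)$ (with $w^\epsilon_{p,q} = q/p$ there, matching the definition). On $A_-$, I would write $q(x) = p(x) + (q(x) - p(x))$, which produces two pieces: $\int_{A_-} f(x) p(x)\, dx$, which (since $w^\epsilon_{p,q} = 1$ on $A_-$) combines with the $A_+$ piece to give exactly $\En_p[w^\epsilon_{p,q}(x) f(x)]$; and a residual $\int_{A_-} f(x)(q(x) - p(x))\, dx$.

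Next I would control the residual. Split $A_-$ further by the sign of $q - p$: on $A_- \cap \{q < p\}$ the integrand is non-positive (because $f \geq 0$), so dropping it only increases the right-hand side. On $A_- \cap \{q \geq p\} = \{x : \delta_{p,q}(x) = 1\}$, use $f(x) < M$ to obtain
\begin{equation*}
\int_{A_-\cap\{q\geq p\}} f(x)(q(x)-p(x))\, dx \leq M\int \delta_{p,q}(x)(q(x)-p(x))\, dx = M\cdot \dsuppeps{p}{q},
\end{equation*}
where the last equality is just the definition of the support sufficiency divergence. Combining the two clean pieces with this bound yields the claim.

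The only subtle step is the sign manipulation on $A_-$: one must be careful to verify that the two sub-regions (with $q \geq p$ and $q < p$) can be handled by different moves (bounding by $M$ on one, discarding on the other) without losing the desired inequality. Everything else is a routine rewriting using the definitions of $w^\epsilon_{p,q}$, $\delta_{p,q}$, and $\dsuppeps{p}{q}$. Equality in the bound is attained exactly when the residual on $A_-$ vanishes or is saturated, i.e., when $p = q$ or when $p(x) = \epsilon$ on $\supp(q)$, matching the equality conditions stated in the lemma.
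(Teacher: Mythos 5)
Your proposal is correct and follows essentially the same route as the paper's proof: split $\cX$ by the threshold $\epsilon$, reweight the high-density region by $q/p$, add and subtract $p$ on the low-density region, drop the non-positive part (where $q<p$, using $f\geq 0$), and bound the remainder by $M$ times the support sufficiency divergence. The only difference is cosmetic --- you are slightly more careful about the boundary case $p(x)=\epsilon$ and about making the use of $f\geq 0$ explicit.
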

\begin{proof}
We have,
\begin{align*}
&\E_q[f(x)] = \int_x q(x)f(x) dx  \\
&= \int_{x : p(x)>\epsilon} q(x)f(x) dx + \int_{x : p(x)\leq \epsilon} q(x)f(x) dx \\
&= \int_{x : p(x)>\epsilon} \frac{q(x)}{p(x)} p(x)f(x) dx + \int_{x : p(x)\leq \epsilon} q(x)f(x) dx \\
&\leq \En_p\left[w_{p,q}^\epsilon(x) f(x)\right] + \int_{x : p(x)\leq \epsilon} (q(x)-p(x))f(x) dx \\
&\leq \En_p\left[w_{p,q}^\epsilon(x) f(x)\right] + M \int_{\substack{x : p(x)\leq \epsilon \\ p(x) \leq q(x)}} (q(x)-p(x)) dx  \\
&= \En_p\left[w_{p,q}^\epsilon(x) f(x)\right] \\
&+ M \int_{x} (q(x)-p(x)) \underbrace{\mathds{1}[p(x)\leq \epsilon \land p(x)\leq q(x)]}_{\delta^\epsilon_{p,q}(x)} dx\\
& = \En_p\left[w_{p,q}^\epsilon(x) f(x)\right] + M \cdot \left( \E_q[\delta_{p,q}(x)] - \E_p[\delta^\epsilon_{p,q}(x)]\right)
\end{align*}
Further, $p=q$ implies equality when $\epsilon \geq \sup_x q(x)$.
\end{proof}

\subsection{Proof of Theorem~\ref{thm:repbound}}
\label{app:proof_repbound}

\begin{thmlem} Assume that $\pt(Y \mid X) = \ps(Y\mid X)$. Define $Z = \phi(X)$ and let $h(x) = f(\phi(x))$. Then,
$$
\E_{x,y \sim q(x,y)}[\ell(h(x),y)] = \E_{z,y \sim q(z,y)}[\ell(f(z),y)]
$$
\end{thmlem}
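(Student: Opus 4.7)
The plan is to recognize this identity as the standard push-forward / change-of-variables formula applied to the deterministic map $\phi$. The integrand $\ell(h(x),y) = \ell(f(\phi(x)),y)$ depends on $x$ only through $\phi(x) = z$, so an expectation against the joint $q(x,y)$ collapses to an expectation against the induced joint distribution of $(Z,Y) = (\phi(X),Y)$. No nontrivial calculus is required; the content is purely measure-theoretic.

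First I would unfold the LHS by definition,
\[
\E_{x,y \sim q(x,y)}[\ell(h(x),y)] = \iint \ell(f(\phi(x)),y)\, q(x,y)\, dx\, dy,
\]
using $h = f \circ \phi$. Next I would factor $q(x,y) = q(y \mid x)\, q(x)$, group the $x$-integration by level sets of $\phi$, and integrate out $x$ within each preimage $\phi^{-1}(z)$ by disintegration. Because the excerpt assumes that $\phi$ induces no atoms, the push-forward $q(z) = \int_{\phi^{-1}(z)} q(x)\, d\mu(x)$ is a density and the conditional $q(y \mid z)$ is well defined, giving the push-forward joint $q(z,y) = q(y \mid z)\, q(z)$. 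The law of the unconscious statistician then yields, for any integrable $g$,
\[
\iint g(\phi(x),y)\, q(x,y)\, dx\, dy \;=\; \iint g(z,y)\, q(z,y)\, dz\, dy.
\]
Applying this with $g(z,y) = \ell(f(z),y)$ produces exactly $\E_{z,y\sim q(z,y)}[\ell(f(z),y)]$, which matches the RHS.

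The main obstacle, such as it is, is handling the push-forward carefully when $\phi$ is non-invertible, since different preimages $\phi^{-1}(z)$ can overlap non-trivially and no Jacobian is available. The no-atoms hypothesis already invoked in the paper is exactly what lets the disintegration go through as a pure equality of measures; no smoothness of $\phi$ or change-of-variable Jacobian is needed because the claim never attempts to rewrite $q(x)$ in terms of $q(z)$ at a point, only to equate two integrals. Finally, I would note that the stated assumption $\pt(Y\mid X) = \ps(Y\mid X)$ plays no role in the argument; the identity holds for an arbitrary joint $q(x,y)$. The assumption is presumably kept in the statement only to match the covariate-shift context in which this lemma is consumed inside the proof of Theorem~\ref{thm:repbound}.
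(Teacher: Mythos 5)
Your proof is correct and follows essentially the same route as the paper's: both arguments are the push-forward/law-of-the-unconscious-statistician identity, writing $q(z,y)=\int_{x\in\phi^{-1}(z)}q(x,y)\,dx$ and exchanging the order of integration (the paper goes from the $z$-integral to the $x$-integral; you go the other way). Your observation that the hypothesis $\pt(Y\mid X)=\ps(Y\mid X)$ is never used in this identity is also accurate --- the paper's own proof does not invoke it either.
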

\begin{proof}
\begin{align*}
& \E_{z,y \sim q(z,y)}[\ell(f(z),y)] = \\
& = \int_{z,y} q(z, y) \ell(f(z), y) dzdy \\
& = \int_{z,y} \ell(f(z), y) \int_{x \in \phi^{-1}(z)} q(x, y) dxdzdy \\
& = \int_{x,y} q(x, y) \int_{z} \mathds{1}[z = \phi(x)]\ell(f(z), y) dzdxdy \\
& = \int_{x,y} q(x, y) \ell(h(x), y) dxdy \\
& = \E_{x,y \sim q(x,y)}[\ell(h(x),y)]
\end{align*}
\label{lem:helper1}
\end{proof}

\begin{thmlem}
$$
R_t(h) = E_{q(z)p(y\mid z)}[\ell(f(z),y)] + \eta_\phi^\ell(f,y)
$$
\label{lem:helper2}
\end{thmlem}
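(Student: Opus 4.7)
\textbf{Proof plan for Lemma~\ref{lem:helper2}.} The plan is to (i) rewrite $R_t(h)$ as an expectation in the representation space using Lemma~\ref{lem:helper1}, (ii) add and subtract a reference term to extract a difference that equals $\eta_\phi^\ell(f,y)$, and (iii) verify the identification with the definition of $\eta_\phi^\ell$ after the cancellation enabled by Assumption~\ref{asmp:cshift}.

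\textbf{Step 1.} Apply Lemma~\ref{lem:helper1} with $q = \pt$: since $h = f\circ\phi$,
\[
R_t(h) \;=\; \E_{x,y\sim \pt(x,y)}[\ell(h(x),y)] \;=\; \E_{z,y\sim \pt(z,y)}[\ell(f(z),y)] \;=\; \E_{\pt(z)\pt(y\mid z)}[\ell(f(z),y)].
\]

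\textbf{Step 2.} Write $\pt(z,y) = \pt(z)\pt(y\mid z)$ and add and subtract the ``mismatched'' expectation $\E_{\pt(z)\ps(y\mid z)}[\ell(f(z),y)]$, which appears in the statement:
\[
R_t(h) \;=\; \E_{\pt(z)\ps(y\mid z)}[\ell(f(z),y)] \;+\; \Bigl(\E_{\pt(z)\pt(y\mid z)}[\ell(f(z),y)] - \E_{\pt(z)\ps(y\mid z)}[\ell(f(z),y)]\Bigr).
\]
It remains to show that the parenthesised difference equals $\eta_\phi^\ell(f,y)$.

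\textbf{Step 3.} Unpack Definition~\ref{def:infoloss}. By construction,
\[
\Delta_{\pt,p}(x) - \Delta_{\ps,p}(x) \;=\; \E_{\pt(y\mid \phi(x))}[\ell(f(\phi(x)),y)] - \E_{\ps(y\mid \phi(x))}[\ell(f(\phi(x)),y)],
\]
since the two copies of $\E_{p(y\mid x)}[\ell(f(\phi(x)),y)]$ cancel (Assumption~\ref{asmp:cshift} makes $p(y\mid x)$ well-defined, but it is not needed for this cancellation). Therefore
\[
\eta_\phi^\ell(f,y) \;=\; \E_{\pt(x)}\!\left[\E_{\pt(y\mid \phi(x))}[\ell(f(\phi(x)),y)] - \E_{\ps(y\mid \phi(x))}[\ell(f(\phi(x)),y)]\right].
\]
The integrand is a function only of $\phi(x) = z$, so by pushing the measure $\pt(x)$ forward through $\phi$ (the same change-of-variables argument used in Lemma~\ref{lem:helper1}), the outer expectation turns into an expectation with respect to $\pt(z)$, giving
\[
\eta_\phi^\ell(f,y) \;=\; \E_{\pt(z)\pt(y\mid z)}[\ell(f(z),y)] - \E_{\pt(z)\ps(y\mid z)}[\ell(f(z),y)],
\]
which matches the difference isolated in Step~2, completing the proof.

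\textbf{Main obstacle.} The only subtle step is the pushforward in Step~3: justifying $\E_{\pt(x)}[g(\phi(x))] = \E_{\pt(z)}[g(z)]$ when $g(z) = \E_{\pt(y\mid z)}[\ell(f(z),y)]$ (and similarly with $\ps(y\mid z)$). This is a direct consequence of the definition of the pushforward density $\pt(z)$ used throughout the paper, and the argument is essentially a re-application of the computation already done in Lemma~\ref{lem:helper1}; no new machinery is required.
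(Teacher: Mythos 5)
Your proof is correct and follows essentially the same route as the paper's: apply Lemma~\ref{lem:helper1}, add and subtract $\E_{\pt(z)\ps(y\mid z)}[\ell(f(z),y)]$, and identify the remainder with $\eta_\phi^\ell(f,y)$ via the cancellation of the $\E_{p(y\mid x)}$ reference terms and a pushforward through $\phi$. Your version is in fact slightly cleaner, since it cancels the reference terms at the level of Definition~\ref{def:infoloss} rather than re-expanding both expectations as double integrals over $x$ and $y$ as the appendix does.
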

\begin{proof}
By Lemma~\ref{lem:helper1}
$$
R_t(h) = E_{q(x,y)}[\ell(h(x),y)] = E_{q(z,y)}[\ell(f(z),y)]
$$
We have that
\begin{align*}
& E_{q(z)q(y\mid z)}[\ell(f(z),y)] \\
& = \int_{z}\int_{y} q(z)q(y\mid z)\ell(f(z),y) dy dz \\
& = \int_{z}\int_{y} \left( \int_{x : \phi(x) = z} q(x)dx \right) q(y\mid z)\ell(f(z),y) dy dz \\
& = \int_{z}\int_{y} \int_{x \in \phi^{-1}(z)} q(x) q(y\mid \phi(x))\ell(h(\phi(x)),y) dx dy dz \\
& = \int_{x} \int_{y} q(x) q(y\mid \phi(x))\ell(h(\phi(x)),y) dx dy \\
& = \int_{x} \int_{y} q(x) \ell(h(\phi(x)),y) \big[ \underbrace{q(y\mid x)}_{=p(y\mid x) \text{ (assmp.)}} \\ 
& + q(y\mid \phi(x)) -  q(y\mid x) \big] dx dy
\end{align*}
and by the same argument,
\begin{align*}
& E_{q(z)p(y\mid z)}[\ell(f(z),y)] \\
& = \int_{x} \int_{y} q(x) \ell(h(\phi(x)),y) \big[p(y\mid x) \\
& +  p(y\mid \phi(x)) -  p(y\mid x) \big] dx dy
\end{align*}
and as a result,
\begin{align*}
& E_{q(z)q(y\mid z)}[\ell(f(z),y)] - E_{q(z)p(y\mid z)}[\ell(f(z),y)] \\
&\;\;\;\; = \E_q(x)[ \E_{q(y\mid \phi(x))} \ell(f(\phi(x)),y)] - \E_{q(y\mid x)} \ell(f(\phi(x)),y)] \\
&\;\;\;\; + \E_q(x)[ \E_{p(y\mid \phi(x))} \ell(f(\phi(x)),y)] - \E_{p(y\mid x)} \ell(f(\phi(x)),y)] \\
&\;\;\;\; = \eta_\phi^\ell(f,y)
\end{align*}
\end{proof}

\begin{reptheorem}{thm:repbound}
Consider any feature representation $z = \phi(x)$ with $\phi : \cX \rightarrow \cZ$ and prediction function $f : \cZ \rightarrow \cY$, and define $h = f \circ \phi$. Further, let $p(Z)$ and $q(Z)$ be the two distributions induced by the representation $\phi$ applied to $X$ distributed according to $p(X), q(X)$.  Further, assume that for any hypothesis $h$ and a loss function $\ell$, $\sup_{x\in \cX, h\in \cH} \ell(h(x),y) \leq M$. Now, with $\epsilon > 0$, we have the following result.
\begin{align*}
R_q(h) & \leq \En_{p}\left[w_{\ps,\pt}^\epsilon(z) \ell(f(z), y)\right] \\
& + M\dsuppeps{p(z)}{q(z)}  + \eta^\ell_\phi(f, y)~.
\end{align*}
\end{reptheorem}
\begin{proof}
By Lemma~\ref{lem:helper2}, we have that
$$
R_q(h) \leq E_{q(z)p(y\mid z)}[\ell(f(z),y)]  + \eta^\ell_\phi(f, y)~.
$$
Further,
\begin{align*}
& E_{q(z)p(y\mid z)}[\ell(f(z),y)] \\
& = \iint_{z \in \cZ, y \in \cY} q(z)p(y\mid z) \ell(f(z),y) dy dz \\
& = \iint_{z : p(z) \geq \epsilon, y \in \cY} q(z)p(y\mid z) \ell(f(z),y) dy dz \\
& + \iint_{z : p(z) < \epsilon, y \in \cY} q(z)p(y\mid z) \ell(f(z),y) dy dz \\
& = \iint_{z \in \cZ, y \in \cY} w_{\ps,\pt}^\epsilon(z) p(z) p(y\mid z) \ell(f(z),y) dy dz \\
& + \int_{z : \substack{p(z) < \epsilon \\ p(z) \leq q(z)}} \underbrace{(q(z) - p(z))}_{\geq 0} \underbrace{\int_{y}p(y\mid z) \ell(f(z),y) dy}_{\in [0, M]} dz \\
& + \underbrace{\int_{z : \substack{p(z) < \epsilon \\ p(z) > q(z)}} (q(z) - p(z)) \int_{y}p(y\mid z) \ell(f(z),y) dy dz}_{\leq 0}~.
\end{align*}
\end{proof}

\subsection{Kernel support divergence}

Theorem~\ref{thm:repboundk} may be viewed as a measuring differences in density only where supports differ significantly. In the case where $\cL$ is a Hilbert space, similar to the maximum mean discrepancy~\citep{gretton2012kernel}, we may decompose $\dsuppepsk{p}{q}{\cL}$ using reproducing kernels.

\begin{thmlem}\label{lem:kernel}
Let $\cH$ be the reproducing-kernel Hilbert space with kernel $k : \cX \times \cX \rightarrow \mathbb{R}$. Then,
\begin{align}
& \dsuppeps{p}{q}_\cG = \E_{x,x' \sim p}[\delta^\epsilon_p(x,x')  k(x,x') \\
& - 2\E_{x \sim p, x' \sim q}[\delta^\epsilon_p(x,x')  k(x,x')]
 \nonumber \\
& + \E_{x,x' \sim q}[\delta^\epsilon_p(x,x')  k(x,x')] \nonumber
\end{align}
\end{thmlem}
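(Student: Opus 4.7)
The plan is to mimic the standard kernel mean embedding derivation that produces the closed form of the maximum mean discrepancy, with the single twist that everything gets reweighted by the support indicator $\delta_p^\epsilon$. Concretely, I would start from the definition
$$
\dsuppepsk{p}{q}{\cL} = \sup_{\ell \in \cL}\left|\E_q[\delta_p^\epsilon(x)\ell(x)] - \E_p[\delta_p^\epsilon(x)\ell(x)]\right|,
$$
take $\cL$ to be the unit ball of the RKHS $\cH$ with reproducing kernel $k$, and use the reproducing property $\ell(x) = \langle \ell, k(x,\cdot)\rangle_\cH$ to pull $\ell$ outside the expectation (via a Fubini/Bochner argument). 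This identifies the weighted mean embeddings
$$
\mu_p^\epsilon := \E_{x\sim p}[\delta_p^\epsilon(x)\, k(x,\cdot)], \qquad \mu_q^\epsilon := \E_{x\sim q}[\delta_p^\epsilon(x)\, k(x,\cdot)],
$$
so that the quantity inside the supremum equals $\langle \ell, \mu_q^\epsilon - \mu_p^\epsilon\rangle_\cH$.

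The next step is to take the supremum over the unit ball, which by Cauchy--Schwarz and its equality case yields $\dsuppepsk{p}{q}{\cL} = \|\mu_q^\epsilon - \mu_p^\epsilon\|_\cH$. Squaring and expanding bilinearly gives
$$
\|\mu_q^\epsilon - \mu_p^\epsilon\|_\cH^2 = \langle \mu_p^\epsilon,\mu_p^\epsilon\rangle_\cH - 2\langle \mu_p^\epsilon,\mu_q^\epsilon\rangle_\cH + \langle \mu_q^\epsilon,\mu_q^\epsilon\rangle_\cH,
$$
and evaluating each inner product by pushing the expectations through and applying $\langle k(x,\cdot), k(x',\cdot)\rangle_\cH = k(x,x')$ produces exactly the three terms in the lemma, once one reads $\delta^\epsilon_p(x,x') = \delta^\epsilon_p(x)\delta^\epsilon_p(x')$. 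This also suggests that the lemma, as stated with an equality between a scalar distance and a sum of three terms, is really identifying $\dsuppepsk{p}{q}{\cL}^2$ with the RHS; I would either square the LHS or add a square root over the RHS.

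The main obstacle is justifying the interchange of expectation and inner product, i.e.\ showing $\mu_p^\epsilon, \mu_q^\epsilon$ exist as Bochner integrals in $\cH$. This needs $\E[\delta_p^\epsilon(x)\sqrt{k(x,x)}] < \infty$ under both $p$ and $q$, which is immediate for any bounded kernel since $\delta_p^\epsilon(x)\in\{0,1\}$; for unbounded kernels one would impose a mild moment condition. A secondary point is that the supremum is taken over an absolute value, but the squared RKHS norm is manifestly symmetric in $\mu_p^\epsilon, \mu_q^\epsilon$, so the sign ambiguity disappears automatically once we square, and the result of the lemma follows.
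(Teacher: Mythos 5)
Your proposal is correct and follows essentially the same route the paper takes, since the paper's own ``proof'' simply defers to the standard kernel mean embedding expansion of \citet{gretton2012kernel} (reproducing property, weighted mean embeddings, Cauchy--Schwarz over the unit ball, bilinear expansion), which is exactly what you carry out. You are also right that the lemma as printed conflates the divergence with its square --- the right-hand side is $\lVert \mu_q^\epsilon - \mu_p^\epsilon\rVert_\cH^2$, so a square belongs on the left-hand side --- and your Bochner integrability remark fills the only condition the paper leaves implicit.
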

\begin{proof}
The proof follows from \citet{gretton2012kernel} and can be found in Appendix~\ref{app:proof_kernel}.
\end{proof}

\label{app:proof_kernel}
Let $\delta^\epsilon_p(x) = \mathds{1}[p(x)\leq \epsilon]$ and $\delta^\epsilon_p(x,x') = \delta^\epsilon_p(x)\delta^\epsilon_p(x')$
\begin{align}
\dsuppeps{p}{q}_\cG & \coloneqq \sup_{g \in \cG} \left| \E_{q}[\delta^\epsilon_p(x)g(x)] - \E_{p}[\delta^\epsilon_p(x) g(x)] \right| \nonumber \\
& = \E_{x,x' \sim p}[\delta^\epsilon_p(x,x')  k(x,x')] \nonumber \\
& - 2\E_{x \sim p, x' \sim q}[\delta^\epsilon_p(x,x')  k(x,x')] \\
& + \E_{x,x' \sim q}[\delta^\epsilon_p(x,x')  k(x,x')] \nonumber
\end{align}
\begin{proof}
  Follow \url{http://alex.smola.org/teaching/iconip2006/iconip_3.pdf} page 18--20.
\end{proof}

\section{Model}
\label{app:model}

We may bound $\dsuppeps{p}{q}$ using the hinge loss as follows,
\begin{align*}
&\dsuppeps{p}{q} \\
& \leq \E_{x\sim q}\left[\max\left(0,2-\frac{p(x)}{\epsilon} \right)\max\left(0,2-\frac{p(x)}{q(x)} \right)\right] \\
& - \E_{x\sim p}\left[\max\left(0,1-\frac{p(x)}{\epsilon}\right)\max\left(0,1-\frac{p(x)}{q(x)} \right)\right] \\
& =: \dsuppx{\tilde{d}^\epsilon}{p}{q}~.
\end{align*}

\section{Experiments}
\label{app:experiments}
In Figure~\ref{fig:embeddings}, we see that the embeddings learned using DANN models under label marginal shift show worse separation between classes, than the embeddings learned under equal label marginal distributions.

\begin{figure}[tbp!]
  \centering
  \begin{subfigure}{.9\columnwidth}
    \centering
    \includegraphics[width=\textwidth]{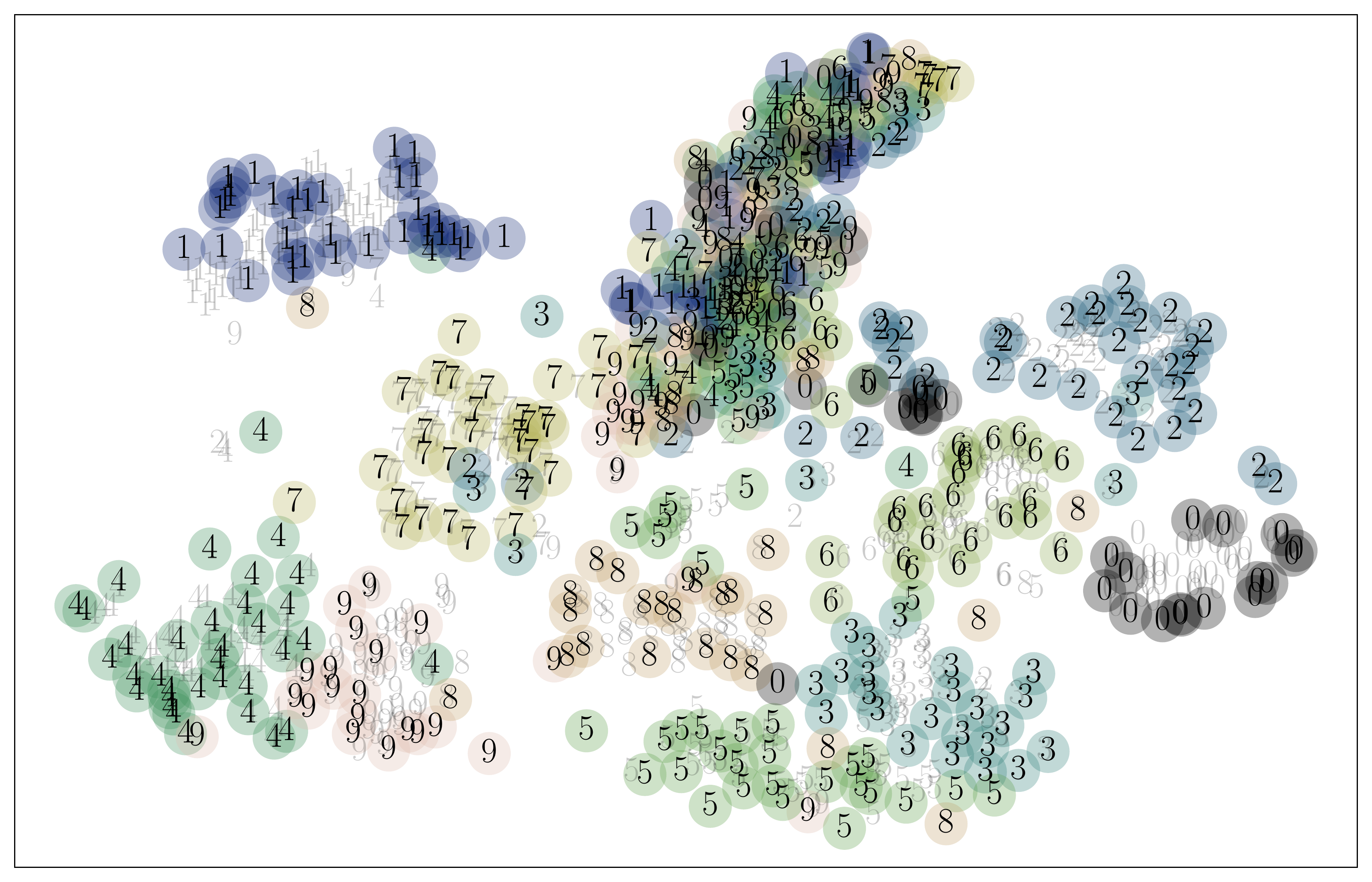}
    \caption{MNIST$\rightarrow$MNIST-M}
  \end{subfigure}
  \begin{subfigure}{.9\columnwidth}
    \centering
    \includegraphics[width=\textwidth]{mnist_notarget012_embed.pdf}
    \caption{MNIST$\rightarrow$MNIST-M$\setminus\{0,1,2\}$}
  \end{subfigure}
  \caption{\label{fig:embeddings}Embeddings learned by DANN with equal (top) and unequal (bottom) label marginal distributions. In MNIST-M$\setminus\{0,1,2\}$, all images of digits 0,1,2 have been removed. Grey digits are embeddings and labels of the source domain. Black digits against colored background are from the target domian. }
\end{figure}

\section{Consistent domain-invariant variable selection}
\label{app:consistency}
Consider a matrix $A \in \{0, 1\}^{k \times d}$ with $k<d$ such that $\forall j : \sum_{i=1}a_{ij} \leq 1$ and $\forall i : \sum_{j=1}a_{ij} \leq 1$. In other words, $A$ is a variable selection operator on $X$. Now, assume that $Z \coloneqq \Phi(X) \coloneqq AX$ is sufficient for $Y$ on $\ps$ and $\pt$ and  that $\ps(AX) = \pt(AX)$. Further, assume labeled data is observed under $p$ and unlabeled data observed under $\pt$. Then, is $Y$ identifiable based on domain-invariance and source predictive loss?

\begin{thmcond}[Smoothness]\label{cond:smoothness}
With $\Sigma_L = \{f: \sum_{k\in \mathbb{Z}^d} k_2^2\langle f, \varphi_k\rangle^2 \leq L; \forall j \in \{1, \ldots, d\}$ for $L>0$, $f$ is $L$-smooth if $f\in \Sigma_L$, with $\varphi_k$ the trigonometric fourier basis.
\end{thmcond}
\begin{thmcond}[Identifiability]
A sufficient set of variables $J$, such that $\exists \bar{f} : f(x) = \bar{f}(x_J)$ for all $x\in \mathbb{R}^d$, is $\kappa$-identifiable if for all $j\in J$,
$$
\int_{[0,1]^d}(f(x) - \int_0^1 f(x) dx_j)^2 dx \geq \kappa~.
$$
\end{thmcond}
\begin{thmcond}[Positive bounded support]
The density $\ps(x)$ has positive bounded support over $[0,1]^d$ if with ${\ps}_{\min} > 0$, $\forall x\in [0,1]^d : \ps(x) \geq {\ps}_{\min}$ and $\forall x \notin [0,1]^d : \ps(x) = 0$.
\end{thmcond}
\begin{thmcond}[Bounded $\infty$-norm and $2$-norm]
A function $f$ has bounded $\infty$-norm and $2$-norm with respect to $p$ if $\pr_{X\sim \ps}(|f(X)| \leq L_\infty) = 1$ and $\E_{X\sim \ps}[f(X)^2] \leq L_2^2$.
\end{thmcond}
\begin{thmcond}[Sub-gaussian additive noise]\label{cond:subgaussian}
The observed outcome may be written as $Y_i = f(x_i) + \sigma\epsilon_i$ with $\E[e^{t\epsilon_i}\mid X_i] \leq e^{t^2/2}$ for all $t>0$.
\end{thmcond}

\begin{thmthm}[Variable selection in non-parametric regression~\citep{comminges2012tight}]\label{thm:varsel_regression} Assume that Conditions~\ref{cond:smoothness}--\ref{cond:subgaussian} hold, with known parameters ${\ps}_{\min}$, $\theta = 2L/\kappa$ and $L_2$. Then, there is an estimator $\hat{J}$ that satisfies $\pr(\hat{J} \neq J) \leq (8d/d^*)^{-d^*}$.
\end{thmthm}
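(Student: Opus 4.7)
The plan is to build $\hat{J}$ coordinate-by-coordinate via a Fourier-based estimator of the identifiability score
\[
T_j \;:=\; \int_{[0,1]^d}\Bigl(f(x) - \int_0^1 f(x)\,dx_j\Bigr)^2 dx,
\]
combined with thresholding at $\kappa/2$. First I expand $f$ in the tensor-product trigonometric basis $\{\varphi_k\}_{k \in \mathbb{Z}^d}$ with coefficients $\theta_k := \langle f, \varphi_k\rangle$. A direct Parseval calculation yields $T_j = \sum_{k\,:\,k_j \neq 0} \theta_k^2$. Under Condition~\ref{cond:smoothness}, the Sobolev-type ball $\Sigma_L$ forces the truncation tail $\sum_{|k|_\infty > N}\theta_k^2$ to be of order $L/N^2$, so restricting to indices with $|k|_\infty \leq N$ introduces a deterministic bias of order $L/N^2$ in each $T_j$; choose $N$ so that this bias is at most $\kappa/4$.

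Next, I would estimate each Fourier coefficient by the importance-weighted sample mean $\hat{\theta}_k = \frac{1}{n}\sum_{i=1}^n Y_i \varphi_k(X_i)/\ps(X_i)$, which is unbiased because $\ps$ has positive bounded support. Since $\varphi_k$ is bounded, $\ps \geq \ps_{\min}$, $f$ is $L_\infty$-bounded, and the noise $\epsilon_i$ is sub-Gaussian (Condition~\ref{cond:subgaussian}), each summand is sub-Gaussian with variance proxy $\asymp (\sigma^2 + L_\infty^2)/\ps_{\min}^2$. A standard sub-Gaussian tail bound, combined with a union bound over the truncated basis, yields $\max_k |\hat{\theta}_k - \theta_k| \leq t$ with failure probability at most $2(2N+1)^d \exp(-cn t^2)$.

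Form the plug-in statistic $\hat{T}_j := \sum_{k\,:\,k_j \neq 0,\,|k|_\infty \leq N} (\hat{\theta}_k^2 - \hat{\tau}_k^2)$, where $\hat{\tau}_k^2$ is an explicit bias correction for the variance of $\hat{\theta}_k$, and declare $\hat{J} := \{j : \hat{T}_j > \kappa/2\}$. Combining the truncation bias, the concentration of $\hat{\theta}_k$, and the variance correction, one obtains with high probability that $|\hat{T}_j - T_j| < \kappa/4$ for every $j \in [d]$ simultaneously, so $\hat{T}_j > \kappa/2$ iff $T_j \geq \kappa$, i.e.\ iff $j \in J$ by Condition on identifiability.

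The hardest step is achieving the stated tight rate $\pr(\hat{J} \neq J) \leq (8d/d^*)^{-d^*}$ rather than a weaker bound whose exponent scales in the ambient dimension $d$. A naive union bound over the $(2N+1)^d$ Fourier indices pays a factor $d$ in the log, not $d^*$. To recover the intrinsic-dimension exponent I would restrict the test to basis elements whose frequency vector $k$ is supported on at most $d^*$ coordinates and whose $\ell_\infty$ truncation is optimized against the smoothness exponent, so that the effective cardinality entering the union bound scales like $\binom{d}{d^*} N^{d^*} \asymp (ed/d^*)^{d^*}$. Paired with a Bernstein-type inequality on the quadratic functional $\hat{T}_j$ and an optimized choice of $t$ and $N$ in terms of $\ps_{\min}, \theta = 2L/\kappa, L_2$, this pruning, essentially the construction of~\citet{comminges2012tight}, produces the combinatorial factor matching the target bound.
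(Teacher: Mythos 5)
This theorem is not proved in the paper at all: it is imported verbatim from \citet{comminges2012tight}, and the only in-paper commentary is the remark that the cited estimator is allowed to depend on the source density $\ps(x)$. So there is no internal proof to compare against; the relevant benchmark is the construction in the cited work, and your sketch does track its main structural ideas --- a density-weighted (hence $\ps$-dependent) Fourier coefficient estimator, Parseval identity giving $T_j=\sum_{k:k_j\neq 0}\theta_k^2$, truncation bias controlled by the Sobolev ball, a thresholded quadratic test statistic, and, crucially, restriction of the union bound to frequency vectors supported on at most $d^*$ coordinates so that the combinatorial factor is $\binom{d}{d^*}N^{d^*}$ rather than $(2N+1)^d$. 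That last observation is the right one, and it is the step most people miss.

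That said, what you have is an outline rather than a proof, and the gaps are exactly where the content of the result lives. First, the theorem as stated (and as restated in this appendix) suppresses the sample-size condition; the bound $\pr(\hat J\neq J)\leq (8d/d^*)^{-d^*}$ only holds once $n$ exceeds a threshold depending on $\ps_{\min}$, $\theta=2L/\kappa$, $L_2$, $d$ and $d^*$, and your sketch never exhibits the choice of $N$ and $t$ that makes the three error sources (truncation bias, stochastic error of $\hat T_j$, and the union-bound multiplicity) simultaneously small enough to yield that specific exponent --- asserting that an ``optimized choice'' works is precisely the part that needs to be checked. Second, the concentration of the quadratic statistic $\hat T_j=\sum_k(\hat\theta_k^2-\hat\tau_k^2)$ is not a consequence of coordinatewise control of $|\hat\theta_k-\theta_k|$ at the stated rate: the degenerate U-statistic part and the cross term between noise and signal must be handled with a dedicated exponential inequality, and a naive $\max_k$ bound squared loses too much to recover $(8d/d^*)^{-d^*}$. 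Third, the pruning to $d^*$-sparse frequency supports presumes $d^*$ (or an upper bound on it) is known to the statistician, which is an assumption the theorem statement does not make explicit. None of these is a wrong turn --- they are the standard obstacles and the cited paper resolves all of them --- but as written your argument establishes only a qualitatively similar bound with unoptimized constants, not the stated one.
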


\citet{comminges2012tight} give a constructive proof of Theorem~\ref{thm:varsel_regression} in which the chosen estimator is allowed to depend on the density $\ps(x)$.

%-------------------------------------- REFERENCES ----------------------------------------------%

%-------------------------------------- END OF DOCUMENT ----------------------------------------------%

\end{document}